\DeclareRobustCommand{\mb}[1]{\ensuremath{\boldsymbol{\mathbf{#1}}}}
\DeclareRobustCommand{\KL}[2]{\ensuremath{\textrm{KL}\left(#1\;\|\;#2\right)}}
\DeclareRobustCommand{\wass}[2]{\ensuremath{\mathcal{W}_1\left(#1\;\left.\right\|\;#2\right)}}
\DeclareMathOperator*{\argmax}{arg\,max}
\renewcommand{\mid}{~\vert~}
\newcommand{\mba}{\mb{a}}
\newcommand{\mbb}{\mb{b}}
\newcommand{\mbc}{\mb{c}}
\newcommand{\mbm}{\mb{m}}
\newcommand{\mbt}{\mb{t}}
\newcommand{\mbu}{\mb{u}}
\newcommand{\mbv}{\mb{v}}
\newcommand{\mbx}{\mb{x}}
\newcommand{\mby}{\mb{y}}
\newcommand{\mbz}{\mb{z}}
\newcommand{\mbI}{\mb{I}}
\newcommand{\mbdelta}{\mb{\delta}}
\newcommand{\mbepsilon}{\mb{\epsilon}}
\newcommand{\mbeta}{\mb{\eta}}
\newcommand{\supp}{\textrm{supp}}
\newcommand{\cN}{\mathcal{N}}
\newcommand{\E}{\mathbb{E}}
\newcommand{\cB}{\mathcal{B}}
\newcommand{\cU}{\mathcal{U}}
\newcommand{\hz}{\hat{\mbz}}
\newcommand{\DO}{\text{do}}
\newcommand{\g}{\mid}
\newtheorem{lemma}{Lemma}
\newtheorem{thm}{Theorem}
\crefname{lemma}{lemma}{lemmas}
\crefname{prop}{proposition}{propositions}
\newcommand\myeq{\mathrel{\overset{\makebox[0pt]{\mbox{\normalfont\tiny\sffamily c}}}{=}}}
\newcommand{\indep}{\rotatebox[origin=c]{90}{$\models$}}
\newcommand{\nindep}{\rotatebox[origin=c]{90}{$\not\models$}}
\newcommand{\Hb}{\mathbf{H}}
\newcommand{\MI}{\mathbf{I}}
\newcommand{\mbeps}{\mbepsilon}
\newcommand{\tauhat}{\hat{\tau}}
\newcommand{\ghat}{\hat{g}}
\newcommand{\kld}{\mathbf{KL}}
\newcommand{\mbzhat}{\hat{\mbz}}
\newcommand{\zhat}{\hat{z}}
\newacronym{KL}{kl}{Kullback-Leibler}
\newacronym{ELBO}{elbo}{\emph{evidence lower bound}}
\newacronym{POPELBO}{pop-elbo}{\emph{population evidence lower bound}}
\newacronym{SVI}{svi}{stochastic variational inference}
\newacronym{BUMPVI}{bump-vi}{bumping variational inference}
\newacronym{GMM}{gmm}{Gaussian mixture model}
\newacronym{LDA}{lda}{latent Dirichlet allocation}
\newacronym{SUTVA}{sutva}{stable unit treatment value assumption}
\newacronym{KSD}{ksd}{{kernelized Stein discrepancy}}
\newacronym{KCC-SD}{kcc-sd}{kernelized complete conditional Stein discrepancy}
\newacronym{OPVI}{opvi}{operator variational inference}
\newacronym{SVGD}{svgd}{Stein variational gradient descent}
\newacronym{vde}{VDE}{variational decoupling}
\newacronym{cfn}{CFN}{control function method}
\newacronym{gcfn}{GCFN}{general control function method}
\newacronym{2sls}{2SLS}{two-stage least-squares method}
\newacronym{gmm}{GMM}{generalized method of moments}
\newacronym{iv}{IV}{instrumental variable}
\newacronym{cdf}{cdf}{cumulative distribution function}
\newacronym{oosmse}{oos-mse}{counterfactual MSE}
\newacronym{oos}{oos}{out-of-sample set}
\definecolor{blue}{rgb}{0.0, 0.0, 0.5}
\title{General Control Functions for Causal Effect Estimation from Instrumental Variables}
\author{
  Aahlad Puli\\
  Computer Science\\
  New York University\\
  \texttt{aahlad@nyu.edu} \\
  \And
  Rajesh Ranganath \\
  Computer Science, Center for Data Science \\
  New York University\\
  \texttt{rajeshr@cims.nyu.edu} \\
}
\begin{document}

\maketitle

\vspace{-10pt}
\begin{abstract}
Causal effect estimation relies on separating the variation in the outcome into parts due to the treatment and due to the confounders.
To achieve this separation, practitioners often use
external sources of randomness that only influence the treatment called \glspl{iv}.
We study variables constructed from treatment and \gls{iv} that help estimate effects, called control functions.
We characterize general control functions for effect estimation in a meta-identification result.
Then, we show that structural assumptions on the treatment process allow the construction of general control functions, thereby guaranteeing identification.
To construct general control functions and estimate effects, we develop the \glsreset{gcfn}\gls{gcfn}.
\gls{gcfn}'s first stage called \gls{vde} constructs general control functions by recovering the residual variation in the treatment given the \gls{iv}.
Using \gls{vde}'s control function, \gls{gcfn}'s second stage estimates effects via regression.
Further, we develop semi-supervised \gls{gcfn} to construct general control functions using subsets of data that have both \gls{iv} and confounders observed as supervision; this needs no structural treatment process assumptions.
We evaluate \gls{gcfn} on low and high dimensional simulated data and on recovering the causal effect of slave export on modern community trust~\cite{nunn2011slave}.
\end{abstract}

\vspace{-10pt}
\section{Introduction}\label{sec:intro}

Many disciplines use observational data to estimate causal effects: economics~\citep{baker1999highs}, sociology~\citep{lieberson1987making}, psychology~\citep{mcgue2010causal}, epidemiology~\citep{rothman2005causation}, and medicine~\citep{stuart2013estimating}.
Estimating causal effects with observational data requires care
due to the presence of confounders that influence both treatment and outcome.
Observational causal estimators deal with confounders in one of two ways.
One, they assume that all confounders are observed; an assumption called \emph{ignorability}.
Two, they assume a source of external randomness that has a direct influence only on the treatment.
Such a source is called an \glsreset{iv}\gls{iv}~\citep{angrist2001instrumental,heckman1985alternative}.
An example is college proximity as an \gls{iv} to study effects of education~\cite{card1993using}.

Two common \gls{iv}-based causal effect estimation methods are the \glsreset{2sls}\gls{2sls} \citep{kelejian1971two,amemiya1974nonlinear,angrist1996identification} and the traditional \glsreset{cfn}\gls{cfn}  \citep{heckman1985alternative,wooldridge2015control,wiesenfarth2014bayesian,darolles2011nonparametric}. Both methods have a common first stage: learn a distribution over the treatment conditioned on the \gls{iv}. In the second stage, \gls{2sls} regresses the outcome on simulated treatments from the first stage, while \gls{cfn}'s second stage regresses the outcome on the true treatment and the error in the prediction of treatment from the first stage.
The prediction error can be used to control for confounding and is
thus called a \emph{control function}.
Though widely used,
both \gls{2sls} and \gls{cfn} breakdown under certain conditions like, for example,
when the outcome depends on multiplicative interactions of treatment and confounders.
Further, \gls{cfn} requires an additional assumption about the correlations between noise and outcome.

We study causal estimation with control functions.
To estimate effects, control functions must satisfy ignorability.
Our meta-identification result (\cref{thm:intro}) shows that a control function satisfies ignorability if 1) the control function and \gls{iv} together reconstruct the treatment, and 2) the confounder and control function together are jointly independent of the \gls{iv}.
We will refer to such control functions as \textit{general control functions}.
Effect estimation in general requires that the treatment has a chance to take any value
given the control function; this is called positivity.
We show positivity for general control functions holds if the \gls{iv} can set treatment to any value; we call this a \textit{strong} \gls{iv}.

Any general control function uniquely determines the effect because it satisfies ignorability and positivity (given a strong \gls{iv}).
Causal identification requires effects to be uniquely determined by the observed data distribution.
Thus, building general control functions using observed data guarantees causal identification.
As reconstruction and \textit{marginal} independence are properties of the joint distribution over observed data and control function, they can be guaranteed.
Guaranteeing \textit{joint} independence requires further assumptions as it involves the \textit{unobserved} confounder.
We show that structural assumptions on the treatment process, such as treatment being an additive function of the confounder and \gls{iv}, help ensure joint independence. 

To build general control functions and use them to estimate effects, we develop the \glsreset{gcfn}\gls{gcfn}.
\gls{gcfn}'s first stage, called \glsreset{vde}\gls{vde}, constructs the general control function.
\gls{vde} is a type of autoencoder where the encoder constructs the control function and the decoder reconstructs treatment from control function and \gls{iv}, under the constraint that the control function and \gls{iv} are independent.
When \gls{vde} is perfectly solved with a decoder that reflects a structural treatment process assumption, like additivity, reconstruction and joint independence are guaranteed.
Thus with a strong~\gls{iv}, ignorability and positivity hold which implies identification, and that effect estimation does not require structural assumptions on the \textit{outcome process} like those in \gls{2sls} and \gls{cfn}.
Using \gls{vde}'s general control function, \gls{gcfn}'s second stage estimates the causal effect.
\gls{gcfn}'s second stage can be any method that relies on ignorability 
like matching/balancing methods~\citep{wager2018estimation,dehejia2002propensity,shalit2017estimating} and doubly-robust methods~\citep{funk2011doubly}.

We also consider a setting where a subset of the data has observed confounders that provide ignorability.
We develop semi-supervised \gls{gcfn} to estimate effects in this setting.
Semi-supervised \gls{gcfn}'s first stage is an augmented \gls{vde} that forces the control function to match the confounder in the subset where it is observed.
This augmented \gls{vde} helps guarantee joint independence even with a decoder that does not reflect structural treatment process assumptions.

In~\cref{sec:exps}, we evaluate \gls{gcfn}'s causal effect estimation on simulated data with the outcome, treatment, and \gls{iv} observed.
We demonstrate how \gls{gcfn} produces correct effect estimates without additional assumptions on the true outcome process, whereas \gls{2sls}, \gls{cfn}, and DeepIV \citep{DBLP:conf/icml/HartfordLLT17} fail to produce the correct estimate.
Further, we show that \gls{gcfn} performs on par with recently proposed methods DeepGMM~\citep{bennett2019deep} and DeepIV~\citep{DBLP:conf/icml/HartfordLLT17} on high-dimensional simulations from each respective paper.
We also demonstrate that in data with a small subset having observed confounders, 
semi-supervised \gls{gcfn} outperforms outcome regression on treatment and confounder within the subset.
We also show recovery of the effect of slave export on current societal trust~\cite{nunn2011slave}. 
\vspace{-7pt}
\paragraph{Related Work.}\label{sec:related}
Classical examples of methods that use \glspl{iv} include the Wald estimator \citep{wald1940fitting}, \glsreset{2sls}\gls{2sls} \citep{amemiya1974nonlinear,angrist1996identification,kelejian1971two} and \glsreset{cfn}\gls{cfn}  \citep{darolles2011nonparametric,heckman1985alternative,wooldridge2015control,wiesenfarth2014bayesian}.
The Wald estimator assumes constant treatment effect.
\gls{2sls}'s estimation could be biased when the outcome generating process has multiplicative interactions between treatment and confounders (\cref{app:subsec-2sls}).
\citet{guo2016control} proved that under some assumptions, \gls{cfn} improves upon \gls{2sls}.
Beyond these classical estimators, \citet{wooldridge2015control} discusses extensions of regression residuals for non-linear models under distributional assumptions about the noise in the treatment process.
\citet{DBLP:conf/icml/HartfordLLT17} developed DeepIV, a deep variant of \gls{2sls} and \citet{singh2019kernel} kernelized the \gls{2sls} algorithm.
An alternative to \gls{2sls} is the \gls{gmm}~\citep{hansen1982large} which solves moment equations implied by the independence of the confounder and the \gls{iv}.
\citet{bennett2019deep} develop a minimax \gls{gmm} and use neural networks to specify moment conditions.

Given only an \gls{iv}, treatment, and outcome, causal effects are not identifiable without further assumptions \citep{balke1997bounds,manski1990nonparametric}.
\citet{newey2013nonparametric} and \citet{chetverikov2017nonparametric} assume additive outcome processes, where the outcome process is a sum of the causal effect and zero-mean noise; such models are also called separable.
Identification in separable models relies on the \emph{completeness} condition~\citep{cunha2010estimating} which requires the conditional distribution of treatment given \gls{iv} to sufficiently vary with the \gls{iv}.
\citet{newey2013nonparametric,chetverikov2017nonparametric} discuss non-parametric estimators under assumptions of monotonicity of the treatment process and shape of causal effects (for eg. $U$-shaped).
We focus on the setting where the outcome process \textit{cannot} be represented as a sum of the causal effect and noise, often called a non-separable model~\cite{chesher2003identification}.
\citet{imbens2009identification} showed effect identification in non-separable models when the treatment has a continuous strictly monotonic \gls{cdf} given the \gls{iv}.
Under this same condition, we can guarantee joint independence via a strictly monotonic reconstruction map which means identification holds.

\subsection{Review of \glspl{iv} and traditional control function theory}\label{sec:review}
\vspace{-5pt}
\begin{wrapfigure}[7]{r}{0.3\textwidth}
  \vspace{-30pt}
      \begin{center}
        \begin{tikzpicture}
        \node[state,fill=gray] (z) at (2.5,0.7) {$\mbz$};
        \node[state] (e) at (0,0) {$\mbepsilon$};
        \node[state] (t) [right =of e] {$\mbt$};

        \node[state] (y) [right =of t] {$\mby$};

        \path (e) edge (t);
        \path[dashed] (z) edge (t);
        \path (t) edge (y);
        \path[dashed] (z) edge (y);
        \path[dashed] (z) edge (t);
    \end{tikzpicture}
      \end{center}
      \caption{Causal graph with hidden confounder $\mbz$, outcome $\mby$, instrument $\mbepsilon$, treatment $\mbt$. }\label{fig:iv-graph}
\end{wrapfigure}
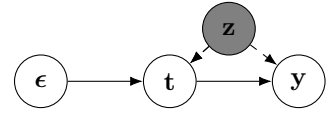  
To define the causal effect we use causal graphs~\citep{pearl2009causal}.
In causal graphs, each variable is represented by a node, and each causal relationship is a directed arrow from the cause to the effect.
Causal graphs get transformed by interventions with the $\DO$-operator.
The shared relationships between the graphs
before and after the $\DO$-operation make estimation possible.
The causal effect of giving a treatment $\mbt=a$ on an outcome $\mby$ is $\E[\mby \g \DO(\mbt=a)]$.
The causal graph in \Cref{fig:iv-graph} describes a broad class of \gls{iv} problems.
The difficulty of causal estimation in this graph stems from the unobserved confounder $\mbz$. The \gls{iv} $\mbepsilon$ helps control for $\mbz$.
Two popular \gls{iv}-based methods are the \glsreset{2sls}\gls{2sls} and \glsreset{cfn}\gls{cfn}.

We follow  the \gls{cfn} setup from \citet{guo2016control}, where the true outcome and treatment processes have additive zero-mean noise called $\mbeta_{\mby}$ and $\mbeta_{\mbt}$ that may be correlated due to $\mbz$:
\begin{align}\label{eq:linear-confounding}
 \mby = f(\mbt) + \mbeta_{\mby}, \quad \mbt = g(\mbepsilon) + \mbeta_{\mbt}.
\end{align}
To estimate the causal effect, the \gls{cfn} method constructs a control function with the regression residual $\mbt - \hat{g}(\mbepsilon)$.
Then, \gls{cfn} regresses the outcome $\mby$ on the regression residual and the treatment $\mbt$.
The causal effect is the estimate of the function $f(\mbt)$.
For this estimate to be valid, the \gls{cfn} method assumes that $\mbeta_{\mbt},\mbeta_{\mby}$ satisfy the following property for some constant $\rho$, (\text{assumption A4 in [\citenum{guo2016control}]}):
\begin{align}\label{eq:linear-noise-assumption}
& \E[\mbeta_{\mby}\g  \mbeta_{\mbt}=\eta] = \rho \eta
\end{align}

This property restricts the applicability of the \gls{cfn} method by limiting how confounders influence the outcome and the treatment. Consider the following additive noise example:
$ \,\, \mbepsilon, \mbz\sim \cN(0,1),\,\, \mbt = \mbz + \mbepsilon,\,\, \mby \sim \cN(\mbt^2 + \mbz^2,1 )$, where $\cN$ is the standard normal.
Here $\mbeta_{y}= \mbz^2$ and $\mbeta_{\mbt} = \mbz$ meaning that $\E[\mbeta_{\mby}\g \mbeta_{\mbt}=\eta] = \eta^2$, violating the assumption in \cref{eq:linear-noise-assumption}.
Note that  $\E[\mbz\mbt^2] = \E[\mbz\mbz^2] = 0$, however $\E[\mbt^2\mbz^2]>0$.
This means regressing $\mby$ on $\mbt^2$ and $\mbz$, i.e., with the correct model for $f(\mbt)$, would result in an inflated coefficient of $\mbt^2$, which is an incorrect causal estimate.
\Cref{eq:linear-noise-assumption} is required because some specified function of $\mbt$ could be correlated with an unspecified function of $\mbz$, resulting in a biased causal estimate.
See \cref{app:subsec-2sls} for an example where \gls{2sls} produces biased effect estimates.
The assumption in \cref{eq:linear-confounding} restricts the confounder's
influence to be additive on both the treatment and outcome.
Further, \gls{cfn} assumes that the average additive influence the confounder has on the outcome to be a scaled version of the confounder's influence on the treatment (\cref{eq:linear-noise-assumption}).
Such assumptions may not hold in
real data. For example, the effect of a medical treatment on patient lifespan
is confounded by the patient's current health. This confounder influences the treatment through a human decision process, while it influences the outcome through
a physiological process making it unlikely to meet \gls{cfn}'s assumptions.
\section{Causal Identification with General Control Functions}
With a control function that satisfies ignorability and positivity, causal estimation reduces to regression of the outcome on the treatment and the control function.
We characterize such control functions:
\newcommand\theoremone{
Let $F(\mbt,\mbepsilon,\mby)$ be the true data distribution.
Let control function $\hz$ be sampled conditionally on $\mbt,\mbepsilon$.
Let $q(\hz, \mbt, \mbepsilon) = q(\hz \g \mbt, \mbepsilon) F(\mbt, \mbepsilon)$ be the joint distribution over $\hz, \mbt, \mbepsilon$.
Further, let $g$ be a deterministic function and $\mbdelta$ be independent noise such that \ $\mbt = g(\mbz,\mbepsilon, \mbdelta)$ and let the implied true joint be $F'(\mbt,\mbz,\mbdelta)$. 
Assume the following:
\vspace{-5pt}
 \begin{enumerate}[leftmargin=*,itemsep=-2pt]
      \item (A1) $\hz$ satisfies 
        the \textbf{reconstruction} property: $\exists d,\, \hz,\mbt, \mbepsilon \sim q(\hz, \mbt, \mbepsilon)  \implies \mbt = d(\hz,\mbepsilon)$.
     \item (A2) The \gls{iv} is \textbf{jointly independent} of control function, true confounder, and noise $\mbdelta$: $\mbepsilon \indep (\mbz,\hz, \mbdelta)$.
     \item (A3) \textbf{Strong} \gls{iv}.
     For any compact $B\subseteq \supp(\mbt)$, 
     $\exists c_B\, \textrm{s.t.\ a.e.\ } t\in B$, $F'(\mbt =t \g \mbz, \mbdelta) \geq c_B>0.$
 \end{enumerate}
Then, the control function $\hz$ satisfies ignorability and positivity:
\begin{align*}
  & q(\mby \g \mbt=t, \hz )= q(\mby \g \emph{\DO} (\mbt=t), \hz)
  \quad \quad \emph{\text{a.e. in }} \supp(\mbt) \quad q(\hz) >0 \implies q(\mbt=t\g  \hz)>0.
\end{align*} 
Therefore, the true causal effect is uniquely determined by $q(\hz, \mbt, \mby)$ for almost every $t\in \supp(\mbt)$:
\begin{align*}
    \E_{\hz}[ \mby \g \mbt=t, \hz] & = \E_{\hz}[ \mby \g \emph{\DO}(\mbt=t), \hz]
    = \E[\mby\g \emph{\DO}(\mbt=t)].
\end{align*}}

\begin{thm}\label{thm:intro} \emph{\textbf{(Meta-identification result for control functions)}}\\
\theoremone{}
\end{thm}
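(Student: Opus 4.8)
The plan is to prove the result inside the structural causal model of \Cref{fig:iv-graph}, augmented with the encoder that generates $\hz$. Write the structural equations $\mby = f(\mbt,\mbz,\mbeta_{\mby})$, $\mbt = g(\mbz,\mbepsilon,\mbdelta)$, and $\hz = e(\mbt,\mbepsilon,\mbu)$, where $\mbz,\mbepsilon,\mbdelta,\mbu,\mbeta_{\mby}$ are mutually independent exogenous variables ($\mbu$ implements the kernel $q(\hz\mid\mbt,\mbepsilon)$, and $\mbeta_{\mby}$ is the exogenous outcome noise, not assumed additive). The potential outcome is then $\mby(t)=f(t,\mbz,\mbeta_{\mby})$, and I read $q(\mby\mid\DO(\mbt=t),\hz)$ as the counterfactual law $P(\mby(t)\mid\hz)$ at the factual value of $\hz$. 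I would reduce the whole theorem to two facts: \textbf{(i)} the conditional independence $\mbt\indep\mbz\mid\hz$, and \textbf{(ii)} positivity, $q(\hz)>0\implies q(\mbt=t\mid\hz)>0$ for a.e.\ $t\in\supp(\mbt)$. Granting these, identification follows from the usual adjustment argument: by consistency $\E_q[\mby\mid\mbt=t,\hz]=\E[\mby(t)\mid\mbt=t,\hz]$; since $\mby(t)$ is a function of $(\mbz,\mbeta_{\mby})$, with $\mbeta_{\mby}$ exogenous, and $\mbt\indep\mbz\mid\hz$, this equals $\E[\mby(t)\mid\hz]$ (this is ignorability); and averaging over $q(\hz)$ — legitimate because (ii) makes every conditioning event have positive mass — collapses it to $\E[\mby(t)]=\E[\mby\mid\DO(\mbt=t)]$.

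Fact (i) is short. By A1 the reconstruction property is the a.s.\ functional identity $\mbt=d(\hz,\mbepsilon)$, so conditioning on $\hz=\zhat$ makes $\mbt$ a deterministic function of $\mbepsilon$ alone. By A2, $\mbepsilon\indep(\mbz,\hz)$, hence $\mbepsilon\indep\mbz\mid\hz$; since, given $\hz=\zhat$, $\mbt=d(\zhat,\mbepsilon)$ depends only on $\mbepsilon$, we conclude $\mbt\indep\mbz\mid\hz$. The only care here is to use A1 as an almost-sure identity and to combine $\mbt\indep\mbz\mid\hz$ with the exogeneity of $\mbeta_{\mby}$ (via weak union / contraction) to get $\mby(t)\indep\mbt\mid\hz$.

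Fact (ii) is where A3 enters, and it is the technical core. The key observation is that the \emph{full} joint independence of A2, $\mbepsilon\indep(\mbz,\hz,\mbdelta)$, forces the conditional law of $\mbepsilon$ given $(\hz=\zhat,\mbz,\mbdelta)$ to be the marginal of $\mbepsilon$. Under that conditioning $\mbt=g(\mbz,\mbepsilon,\mbdelta)$ with the conditioned values of $\mbz,\mbdelta$ plugged in, so $q(\mbt=t\mid\hz=\zhat,\mbz,\mbdelta)$ is the pushforward density of $\mbepsilon$ through $g(\mbz,\cdot,\mbdelta)$ — which, using $\mbepsilon\indep(\mbz,\mbdelta)$ once more, is exactly $F'(\mbt=t\mid\mbz,\mbdelta)$. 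Now A3 says that for any compact $B\subseteq\supp(\mbt)$ there is $c_B>0$ with $F'(\mbt=t\mid\mbz,\mbdelta)\ge c_B$ for a.e.\ $t\in B$ uniformly in $(\mbz,\mbdelta)$; integrating over $q(\mbz,\mbdelta\mid\hz=\zhat)$ gives $q(\mbt=t\mid\hz=\zhat)\ge c_B>0$ for a.e.\ $t\in B$, and exhausting $\supp(\mbt)$ by countably many compacts yields positivity a.e.\ on $\supp(\mbt)$ for every $\zhat$ with $q(\zhat)>0$.

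I expect the main obstacle to be the measure-theoretic bookkeeping in Fact (ii): making the identity $q(\mbt=t\mid\hz=\zhat,\mbz,\mbdelta)=F'(\mbt=t\mid\mbz,\mbdelta)$ rigorous (existence of regular conditional densities, null sets that may depend on $\zhat$), and pushing the uniform-on-compacts lower bound through the integral over $(\mbz,\mbdelta)$ and then through the exhaustion of $\supp(\mbt)$. A secondary, conceptual point worth stating explicitly is that $\hz$ is a \emph{post-treatment} variable, so the back-door criterion does not apply; the argument above instead establishes conditional ignorability $\mby(t)\indep\mbt\mid\hz$ directly from A1--A2 and combines it with positivity to justify the observational adjustment formula.
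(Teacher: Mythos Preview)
Your proposal is correct and follows essentially the same route as the paper: derive $\mbt\indep\mbz\mid\hz$ from reconstruction (A1) and the conditional independence $\mbepsilon\indep\mbz\mid\hz$ implied by (A2), lift to $\mby(t)\indep\mbt\mid\hz$ via exogenous outcome noise, and for positivity use the full joint independence in (A2) to reduce $q(\mbt=t\mid\hz)$ to an average of $F'(\mbt=t\mid\mbz,\mbdelta)$ over $q(\mbz,\mbdelta\mid\hz)$ before invoking the uniform lower bound in (A3). Your additional care about regular conditional densities, the graphoid step combining $\mbt\indep\mbz\mid\hz$ with exogeneity of $\mbeta_{\mby}$, and the remark that $\hz$ is post-treatment (so one argues ignorability directly rather than via back-door) are welcome refinements, but the skeleton matches the paper's proof exactly.
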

\Cref{thm:intro} characterizes functions of treatment and \gls{iv} that satisfy reconstruction (A1) and  joint independence (A2) which we call \textit{general control functions}.
Positivity of $\mbt$ w.r.t.\ the general control function holds under an assumption about the treatment process that the \gls{iv} is strong (A3).
Ignorability and positivity w.r.t.\ $\hz$ imply that the true causal effect is uniquely determined as a function of the observed data distribution $q(\hz, \mbt, \mby)$
\footnote{We also require that $\E_{\hz}\E[\mby\g \DO(\mbt), \hz]$ 
exists.
This is guaranteed if the causal effect $\E[\mby\g \DO(\mbt)] = \E_{\mbz}\E[\mby \g \mbt, \mbz]$ exists as ignorability holds w.r.t.\ $\hz$: $\E_{\hz}\E[\mby\g \DO(\mbt), \hz] = \E_{\hz}\E_{q(\mbz \g \hz)}\E[\mby\g \mbt, \mbz] = \E_{\mbz}\E[\mby \g \mbt, \mbz]$.
}.
If A1 and A2 are satisfied by the observed data distribution $q(\hz, \mbt, \mby, \mbeps)$, the true effect is uniquely determined by the observed data distribution and thus causal identification holds.
However, joint independence (A2) relies on the \textit{unobserved} true confounder $\mbz$.
So, \cref{thm:intro} is a \textit{meta}-identification result because it does not specify how to guarantee joint independence using $q(\hz, \mbt, \mbeps)$.
In~\cref{sec:identification}, we discuss structural assumptions on the treatment process that instantiate this meta-result and guarantee identification.

\Cref{thm:intro} holds for both discrete and continuous $\mbt$ given that the causal effect exists for all $t\in \supp(\mbt)$.\footnote{Effects for certain treatments can be identified even without the strong \gls{iv} assumption (A3):
for any compact subset $B \subseteq \supp(\mbt)$ such that $\forall t \in B,\,\, F^\prime(\mbt =t \g \mbz, \mbdelta)\geq c_B>0$, effects can be estimated for all $t\in B$\label{pos-viol}.\vspace{-15pt}}
While we focus on the causal effect $\E[\mby \g \DO(\mbt)]$, \cref{thm:intro} guarantees any property of $\mby\g \DO(\mbt)$ can be estimated; for e.g. quantile treatment effects.
For ease of exposition, we restrict ourselves to treatments of the form $\mbt = g(\mbeps, \mbz)$, without noise $\mbdelta$.
Then, \cref{thm:intro} requires only $\mbeps \indep (\mbz, \hz)$.
In~\cref{appsec:general-t-process}, we show $\mbeps \indep (\hz, \mbz, \mbdelta)$ is guaranteed for more general treatment processes of the form $\mbt= g(\mbeps, h(\mbz, \mbdelta))$.
Guaranteeing joint independence requires further conditions and is the central challenge in developing two-stage \gls{iv}-based estimators.

\paragraph{Why joint independence?} 
A potential outcome $\mby_{\mbt}$ is the outcome that would be observed if a unit is given treatment $\mbt$.
The potential outcome $\mby_{\mbt}$ follows the distribution of $\mby$ under the $do$ operator and only depends on the true confounder $\mbz$.
For ignorability with respect to $\hz$, we need $\mby_{\mbt}$ to be independent of $\mbt$, given $\hz$.
By reconstruction, given $\hz$, $\mbt$ is purely a function of $\mbepsilon$.
This means ignorability with respect to the control function $\hz$ requires that the true confounder and \gls{iv} be independent given the control function.
Therefore, ignorability
requires $\mbz \indep \mbeps \g \hz$.
Further, conditional independence $\hz\indep \mbeps\g \mbz$ implies positivity of $\mbt$ w.r.t\ $\hz$ if $\mbeps$ is strong. 
Joint independence $\mbepsilon \indep (\mbz, \hz)$ implies both the conditional independencies above.

The causal graph \cref{fig:iv-graph} with $\mby$ marginalized out can
be represented with two sources of randomness one from the unobserved
confounder $\mbz$ and one from the \gls{iv} $\mbeps$; the extra randomness in $\mbt$
denoted as $\mbdelta$ can be absorbed into $\mbz$. In this setup, the
treatment and control function are deterministic functions of
the unobserved confounder and \gls{iv}.
With only two sources of randomness, joint independence means
the control function $\hz$ needs to
only be a function of the true unobserved confounder $\mbz$.
When $\hz$ is a stochastic function of the treatment and \gls{iv}, joint independence holds if $\hz$ determines $\mbz$ while $\hz\indep \mbeps$.

As $\hz$ and $\mbeps$ are observed, we can guarantee $\hz \indep \mbeps$.
The marginal independence $\mbz\indep\mbeps$ holds by definition of an \gls{iv}. 
However, even both marginal independencies $\hz\indep \mbepsilon$ 
and $\mbz \indep \mbepsilon$ together do not
imply joint independence $\mbepsilon \indep (\hz,\mbz)$. This means 
a control function $\hz$ that satisfies the reconstruction property and marginal independence $\hz \indep\mbepsilon$ may fail to yield ignorability.
In~\cref{app:marginal-v-joint}, we build an example of a deterministic almost everywhere invertible function
of two independent variables $\mbc = f(\mba, \mbb)$ such that $\mbc\indep\mba$ and $\mbc\indep \mbb$ and yet, joint independence $ (\mbc, \mbb)\nindep \mba$ is violated.
As $\mbz$ is unobserved, achieving joint independence requires further assumptions.
Next, we discuss how structural assumptions on the true treatment process can help guarantee joint independence.

\subsection{Guaranteeing joint independence for identification}\label{sec:identification}
We show structural treatment process assumptions help guarantee joint independence by relating it to $q(\hz, \mbt,\mbeps)$ and thus giving identification.
Joint independence can be guaranteed (via marginal independence) if the reconstruction map $d(\hz, \mbeps)$ (A1, \cref{thm:intro}) reflects the functional structure of the treatment process.
As an example, consider an additive treatment process $\mbt = \mbz + g(\mbepsilon)$.
If the reconstruction map is $d(\hz, \mbeps) = h^\prime(\hz) + g^\prime(\mbeps)$ and $\mbeps \indep \hz$, joint independence holds.
To see this, note
\begin{align}
  h^\prime(\hz) - \E_{\hz}[h^\prime(\hz)] = \mbt - \E[\mbt\g \mbepsilon] = \mbz - \E_{\mbz}[\mbz] \implies \exists \text{ constant  } c,\, h^\prime(\hz) = \mbz + c,
\end{align}
 meaning $h^\prime(\hz)$ determines $\mbz$.
By $\hz\indep \mbepsilon$, it holds that
$
q(\hz,\mbz \g \mbepsilon) =  q(\hz , h^\prime(\hz) - c \g \mbepsilon)  
                          = q(\hz,\mbz).
                          $
Thus, leveraging the functional structure of the treatment process helps guarantee joint independence by relating it to $q(\hz, \mbt, \mbeps)$, via $\hz \indep \mbeps$.
Assuming treatment gets generated from other \emph{known} invertible functions, such as multiplication $\mbt = h(\mbz) * g( \mbepsilon)$, also leads to joint independence.
\citet{imbens2009identification} proved effect identification when
the treatment is a continuous strictly monotonic function of the confounder; these conditions helps guarantee joint independence (see~\cref{appsubsec:monotonicity}).
For more general treatments of the form $\mbt = g(\mbeps, h(\mbz, \mbdelta))$
the structural assumptions from above can only guarantee $(h(\mbz, \mbdelta), \hz)\indep \mbeps$; see \cref{appsec:add-dec} for general additive treatments: $\mbt = h(\mbz,\mbdelta) + g(\mbepsilon)$. 
However, we show in~\cref{appsec:general-t-process} that for such general treatment processes $(h(\mbz, \mbdelta), \hz)\indep \mbeps \implies (\mbz, \hz, \mbdelta) \indep \mbeps$ which, together with reconstruction, implies ignorability(\cref{thm:intro}).
In summary, under certain structural assumptions, general control functions exist ($\hz = \mbz$ for example) and can be built using only properties of the observed data distribution $q(\hz, \mbt, \mbeps)$. This guarantees identification.
In~\cref{sec:estimation}, we develop practical algorithms to build general control functions.

\subsection{Comparison of identification with general control functions to existing work}
Traditional \gls{cfn} theory~\cite{guo2016control} relies on the assumption that the treatment process is additive; recall $\mbt = g(\mbepsilon) + \mbeta_t$ from~\cref{sec:review} and $\mbeta_t$ is correlated with outcome noise due to $\mbz$.
Beyond this additivity assumption, traditional \gls{cfn} theory further assumes 
1) the outcome process is additive, like in~\cref{eq:linear-confounding},  
2) the noise $\mbeta_y$ in the outcome process is independent of the \gls{iv},
3) linear noise relationship between $\mbeta_t, \mbeta_y$, like in~\cref{eq:linear-noise-assumption}, and 
4) (relevance) the treatment effect function and \gls{iv} are correlated~\cite{guo2016control}.
When the treatment process is additive, joint independence can be guaranteed as a property of the distribution $q(\hz, \mbt,\mbeps)$, via $\hz \indep \mbeps$; see~\cref{sec:identification}.
Then, identification with general control functions requires a strong \gls{iv}.
While it allows structural outcome process assumptions (like 3) can be relaxed, a strong \gls{iv} needs more than the two \gls{iv} properties, independence with confounder and relevance.
However, domain expertise helps reason about strong \glspl{iv}; for example, can college proximity influence a student’s decision to go to college regardless of skill? If yes, college proximity is a strong \gls{iv}.
We compare against other identification conditions (like \gls{2sls} and~\cite{imbens2009identification}) in~\cref{appsec:identification}.
\section{The General Control Function Method (\gls{gcfn})}\label{sec:estimation}
\gls{gcfn} constructs a general control function and estimates effects with it.
\gls{gcfn} has two stages.
The first stage constructs a general control function as the code of an
autoencoder.
The second stage builds a model from the control function and the treatment to the outcome and estimates effects.
\vspace{-5pt}
\paragraph{Variational Decoupling}\label{subsec:vde}
We construct the control function $\hz$ as a stochastic function of the treatment $\mbt$ and the \gls{iv} $\mbeps$; with parameter $\theta$, the estimator is $q_\theta(\hz \g  \mbt,\mbepsilon)$.
First, to guarantee the reconstruction property (A1 in~\cref{thm:intro}), the control function and the \gls{iv} must determine treatment, implying that with parameter $\phi$, $p_\phi(\mbt\g \hz=\zhat,\mbepsilon)$ should be maximized for $\zhat\sim q(\hz\g \mbt,\mbepsilon)$.
Together, these form the parts of an autoencoder where a control function is sampled conditioned on the treatment and \gls{iv}, while the treatment is reconstructed from the same control function and \gls{iv}.
Second, to guarantee marginal independence, we force the control function to be independent of the \gls{iv}: $\hz\indep \mbepsilon$.
Let the true data distribution be $F(\mbt,\mbepsilon)$ and $\mathbf{I}$ denote mutual information.
Putting the two parts together, we define a constrained optimization to construct $\hz$, called \glsreset{vde}\gls{vde}:
\begin{align}\label{eq:confest}
  \begin{split}
    \text{(\gls{vde})} \quad \max_{\theta,\phi} & \, \E_{F(\mbt,\mbepsilon)} \E_{ q_{\theta }(\hz\g  \mbt,\mbepsilon)}\log p_{\phi }(\mbt \g \hz, \mbepsilon) \quad s.t\quad \mathbf{I}_\theta(\hz; \mbepsilon) = 0.
  \end{split}
\end{align}
Recall from~\cref{sec:identification} that with a reconstruction map $d(\hz, \mbeps)$ (from A1 in~\cref{thm:intro}) that reflects the functional structure of the treatment, marginal independence $\hz \indep \mbeps$ implies joint independence.
To model such a map, \gls{vde}'s decoder, $p_\phi(\mbt\g \hz=\zhat,\mbepsilon)$ reflects the same functional structure.
For example, with an additive treatment process the decoder would be parametrized as $ \log p_\phi(\mbt \g \hz, \mbeps) \propto  -(\mbt - h_\phi^\prime(\hz) - g_\phi^\prime(\mbeps))/\sigma_\phi^2$; $\sigma_\phi$ allows for a point-mass distribution $p_\phi$ at optimum of \gls{vde}.
In summary, beyond the observed treatment and \gls{iv}, \gls{vde} takes a specification of the functional structure of the treatment process as input which informs the structure of the decoder.

\gls{vde} is converted to an unconstrained optimization problem by absorbing the independence constraint into the optimization via the Lagrange multipliers trick with $\lambda>0$,
\begin{align}\label{eq:unconstrained-vde}
\max_{\theta,\phi}&\,  \E_{F(\mbt,\mbepsilon)}\E_{q_{\theta }(\hz\g \mbt,\mbepsilon)}\log p_{\phi }(\mbt\g \hz, \mbepsilon)  - \lambda\mathbf{I}_\theta(\hz; \mbepsilon).
\end{align}
Estimation of the mutual information requires $q_{\theta}(\hz \g \mbepsilon)$.
Instead, we lower bound the negative mutual information by introducing an auxiliary distribution $r_{\nu}(\hz)$. This yields a tractable objective:
\begin{align}\label{eq:lowerbound}
  \begin{split}
    \max_{\theta,\phi,\nu} \,  &\E_{F(\mbt,\mbepsilon)} \big[(1 + \lambda) \E_{q_{\theta }(\hz\g \mbt,\mbepsilon)}\log p_{\phi }(\mbt\g \hz, \mbepsilon)
    -\lambda \KL{q_{\theta}(\hz \g  \mbt, \mbepsilon)}{r_{\nu}(\hz)}\big].
  \end{split}
\end{align}
A full derivation can be found in \Cref{appsec:milb}.
The lower bound is tight when the auxiliary distribution $r_\nu(\hz) = q_\theta(\hz)$.
For example, when $q_\theta(\hz \g  \mbt,\mbepsilon)$ is categorical, optimizing \cref{eq:lowerbound} with a categorical $r_\nu(\hz)$ makes the lower bound tight.
The parameters $\theta, \phi, \nu$ can be learned via stochastic optimization.
\gls{vde} can be adapted to use covariates by conditioning on the covariates as needed.
\vspace{-5pt}

\paragraph{Outcome Modeling.}
\gls{vde} provides a general control function $\hz$ and its marginal distribution $q_\theta(\hz)$.
If the \gls{iv} is strong, $\hz$ satisfies ignorability and positivity and the causal effect can be estimated by regressing the outcome on the control function and the treatment.
Other effect estimation methods like matching/balancing methods~\citep{wager2018estimation,dehejia2002propensity,shalit2017estimating} and doubly-robust methods~\citep{funk2011doubly} can be used.
This regression is \gls{gcfn}'s second stage, called the outcome stage.
We formalize this outcome stage as a maximum-likelihood problem and learn a model with parameters $\beta$ under the true data distribution $F(\mby,\mbt,\mbepsilon)$ 
and the general control function distribution $q_{\theta}(\hz\g \mbt,\mbepsilon)$:
\begin{align}
\arg\max_\beta\E_{F(\mby,\mbt,\mbepsilon)}\E_{q_\theta(\hz \g \mbt,\mbepsilon)}\log p_\beta\left( \mby \g \hz,\mbt \right).
\label{eq:outcome}
\end{align}
\vspace{-5pt}

\paragraph{Semi-Supervised GCFN.}
The explicit optimization to learn the control function
in \gls{vde} makes it simple to take advantage
of datapoints where both the confounder and \gls{iv} are observed by forcing the control function
to predict the observed confounder.
Let $\mbm$ be an missingness indicator variable that is $1$ when the true confounder $\mbz$ is observed and $0$ otherwise.
Let the joint distribution be $F(\mbt, \mbepsilon,\mbm, \mbz)$
and $\zeta$ be a scaling hyperparameter parameter.
Then the augmented \gls{vde} stage in semi-supervised \gls{gcfn}, with $\kappa=\nicefrac{\lambda}{(1 + \lambda)}$, is
\begin{align}\label{eq:semi-gcfn}
  \begin{split}
    \max_{\theta,\phi,\nu} \,  &\mathop{\E}_{F(\mbt,\mbepsilon, \mbm, \mbz)} \left[\mathop{\E}_{q_{\theta }(\hz\g \mbt,\mbepsilon)}\log p_{\phi }(\mbt\g \hz, \mbepsilon)
    -\kappa \KL{q_{\theta}(\hz \g  \mbt, \mbepsilon)}{r_{\nu}(\hz)} 
     + \zeta \mbm\log q_\theta(\hz = \mbz \g \mbt,\mbepsilon).\right]
  \end{split}
\end{align}
The added term $\log q_\theta(\hz = \mbz \g \mbt,\mbepsilon)$ encourages
the control function to place all of its mass on the observed confounder value.
When the control function places all of its
mass on the confounder, the control
function is determined by value of the confounder.
Together with the fact that the 
confounder is independent of the \gls{iv}, this implies
the control function, confounder
pair is jointly independent of the instrument.
Therefore, given enough datapoints with the confounder and \gls{iv} observed, joint independence can be guaranteed without treatment assumptions like in~\cref{sec:identification}.
The second stage of semi-supervised \gls{gcfn}
uses the outcome regression in \cref{eq:outcome}
to estimate effects.

\subsection{Error bounds for \gls{gcfn}'s estimated effects}
An imperfectly estimated general control function may violate the conditional independence $\mbz\indep \mbepsilon \g \hz$ which is required for ignorability.
If ignorability does not hold, estimated effects are biased.
First, assuming an additive treatment process, we bound the expected bias in causal effects using quantities optimized during training in \gls{vde}, specifically reconstruction error and dependence of $\hz$ on $\mbeps$:
\newcommand{\addprocthm}{
  \begin{thm}
    Assume an additive treatment process $\mbt= \mbz + g(\mbeps)$ where $g$ is an $L_g$-Lipschitz function, and $\E_{F(\mbz)}\mbz=0$. Let $\E[\mby\g \mbt=t, \mbz=z] = f(t,z)$ be an $L$-Lipschitz function in $z$ for any $t$. Further,
    \vspace{-15pt}
      \begin{enumerate}[leftmargin=15pt,itemsep=-3pt]
        \item 
        let reconstruction error be non-zero but bounded 
        $\E_{q(\mbt, \hz, \mbeps)}(\mbt - \hz - g^\prime(\mbeps))^2 \leq \delta.$
        Assume that $g^\prime$ is also $L_g$-Lipschitz.
        Further, let $\E_{q(\hz)}\hz=0$, and $\E_{q(\hz)}|\hz|<\infty$.
        \item Assume 
        $\mbeps\nindep\hz$ and let the dependence be bounded: 
        $\max_{\zhat}\wass{q(\mbeps\g\hz=\zhat)}{F(\mbeps)}\leq \gamma$.
      \end{enumerate}
      \vspace{-5pt}
      With the estimated and true causal effects as $\tauhat(t) = \E_{\hz} f(t, \hz)$ and $\tau(t) = E_{\mbz}f(t, \mbz)$ respectively,
    \[
      \,\, \E_{F(\mbt)}|\tauhat(\mbt) - \tau(\mbt)| \leq L \sqrt{\delta + 4\gamma L_g \E_{q(\hz)}|\hz|}
      .
      \]
  \end{thm}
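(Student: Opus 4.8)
The plan is to reduce the effect error to a one‑dimensional optimal‑transport cost between the law of the estimated control function $\hz$ and the law of the true confounder $\mbz$, and then to control that cost with the two quantities tracked during training: the reconstruction bound $\delta$ and the dependence budget $\gamma$. For the reduction: for \emph{any} coupling $\pi$ of $q(\hz)$ and $F(\mbz)$, the $L$‑Lipschitzness of $f(t,\cdot)$ gives $|\tauhat(t)-\tau(t)| = |\E_\pi[f(t,\hz)-f(t,\mbz)]| \le L\,\E_\pi|\hz-\mbz|$ for every $t$; averaging over $F(\mbt)$ (the coupling does not depend on $\mbt$) and using $\E|X|\le(\E X^2)^{1/2}$ yields $\E_{F(\mbt)}|\tauhat(\mbt)-\tau(\mbt)| \le L\sqrt{\E_\pi(\hz-\mbz)^2}$. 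So it suffices to produce one coupling whose second moment is at most $\delta + 4\gamma L_g\,\E_{q(\hz)}|\hz|$.

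I would use the coupling induced by the shared randomness $(\mbt,\mbeps)$: draw $(\mbt,\mbeps)\sim F$; set $\mbz := \mbt - g(\mbeps)$, which has law $F(\mbz)$ because the true process is $\mbt=\mbz+g(\mbeps)$ with $\mbz\indep\mbeps$; and draw $\hz\sim q(\hz\g\mbt,\mbeps)$, which has marginal $q(\hz)$. This defines a joint law of $(\mbt,\mbeps,\mbz,\hz)$; write $\E$ for expectation under it. Writing the additive decoder's residual as $\mbr := \mbt - \hz - g^\prime(\mbeps)$ (so $\E\,\mbr^2 \le \delta$) and $w(\mbeps) := g(\mbeps) - g^\prime(\mbeps)$ (which is $2L_g$‑Lipschitz, since both $g,g^\prime$ are $L_g$‑Lipschitz), subtracting the two decompositions of $\mbt$ gives the clean identity $\hz - \mbz = w(\mbeps) - \mbr$.

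Next I would expand $\E(\hz-\mbz)^2 = \E[(\hz-\mbz)\,w(\mbeps)] - \E[(\hz-\mbz)\,\mbr]$, substituting the identity $\hz-\mbz = w(\mbeps)-\mbr$ only in the second factor. The residual cross term satisfies $|\E[(\hz-\mbz)\mbr]| \le \sqrt{\E(\hz-\mbz)^2}\,\sqrt\delta$ by Cauchy--Schwarz. For the first term, $\E[(\hz-\mbz)\,w(\mbeps)] = \E[\hz\,w(\mbeps)] - \E[\mbz\,w(\mbeps)]$, and the second piece vanishes because $\mbz\indep\mbeps$ and $\E\mbz=0$. For the first piece, condition on $\hz$: $\E[\hz\,w(\mbeps)] = \E_{q(\hz)}\big[\hz\,\E_{q(\mbeps\g\hz)}w(\mbeps)\big]$, and Kantorovich--Rubinstein gives $\big|\E_{q(\mbeps\g\hz=\zhat)}w - \E_{F(\mbeps)}w\big| \le 2L_g\,\wass{q(\mbeps\g\hz=\zhat)}{F(\mbeps)} \le 2L_g\gamma$; the constant $\E_{F(\mbeps)}w$ is annihilated by $\E_{q(\hz)}\hz=0$, so $|\E[\hz\,w(\mbeps)]| \le 2L_g\gamma\,\E_{q(\hz)}|\hz|$. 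Collecting terms, with $A := \sqrt{\E(\hz-\mbz)^2}$ we get the quadratic inequality $A^2 \le 2L_g\gamma\,\E_{q(\hz)}|\hz| + \sqrt\delta\,A$.

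Solving this gives $A \le \tfrac12\big(\sqrt\delta + \sqrt{\delta + 8L_g\gamma\,\E_{q(\hz)}|\hz|}\,\big)$, and concavity of the square root ($\tfrac12(\sqrt a+\sqrt b)\le\sqrt{\tfrac12(a+b)}$) collapses this to $A \le \sqrt{\delta + 4L_g\gamma\,\E_{q(\hz)}|\hz|}$; combining with the reduction step finishes the bound. The step I expect to be delicate is the middle one: recognizing that the shared‑latent coupling through $(\mbt,\mbeps)$ is exactly the one that makes $\hz-\mbz$ degenerate into ``reconstruction residual plus $g-g^\prime$,'' and then arranging the bilinear expansion so that the \emph{only} term touching the unobservable dependence of $\mbz$ on $\mbeps$ is $\E[\hz\,w(\mbeps)]$, which the Wasserstein budget together with the mean‑zero normalization of $\hz$ can absorb. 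The normalization bookkeeping, the Cauchy--Schwarz steps, and the quadratic/concavity endgame are routine.
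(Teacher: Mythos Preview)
Your proof is correct and follows essentially the same route as the paper: the same coupling through $(\mbt,\mbeps)$, the same Lipschitz reduction to $L\sqrt{\E(\hz-\mbz)^2}$, and the same Kantorovich--Rubinstein handling of $\E[\hz\,w(\mbeps)]$ using $\E\hz=0$ and $\mbz\indep\mbeps$, $\E\mbz=0$. The one difference is that the paper avoids your quadratic/concavity endgame by expanding $\delta \ge \E\,\mbr^2 = \E\big[(\mbz-\hz)+w(\mbeps)\big]^2$ directly and dropping the nonnegative $\E\,w^2$ term, which yields $\E(\mbz-\hz)^2 \le \delta + 4\gamma L_g\,\E_{q(\hz)}|\hz|$ in one line without needing to solve for $A$.
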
}
\addprocthm{}
See~\cref{appsec:add-t-error-bound} for the proof.
Second, in~\cref{thm:bound-thm} in~\cref{appsec:gen-err-bound}, we prove a general error bound for \gls{gcfn} that depends on the residual confounding that $\hz$ does not control for, measured as the conditional mutual information $\MI(\mbz; \mbt\g \hz)$.
When $\MI(\mbz; \mbt\g \hz) > 0$, ignorability may not hold 
and estimated effects are biased.
Assuming positivity and a sufficiently concentrated $\mbz\g \hz$, we prove in~\cref{thm:bound-thm} that $\MI(\mbz; \mbt\g \hz)$ controls average absolute error in effects.
This error is tempered by the smoothness of outcome as a function of the confounder $\mbz$.
This bound also accounts for errors due to poor estimation of $\E[\mby\g \mbt, \hz]$ in low density regions of $q(\mbt, \hz)$ which may occur when $\hz\nindep\mbeps$.
\section{Experiments}\label{sec:exps}
We evaluate \gls{gcfn} on simulated data, where the true causal effects are known and show that \gls{gcfn} corrects for confounding and estimates causal effects better than \gls{cfn}, \gls{2sls}, and a \gls{2sls} variant, DeepIV~\cite{DBLP:conf/icml/HartfordLLT17}.
We then evaluate \gls{gcfn} on high-dimensional data using simulations from DeepIV~\cite{DBLP:conf/icml/HartfordLLT17} and DeepGMM~\cite{bennett2019deep}.
Then, we estimate the effect of slave export on community trust~\cite{nunn2011slave} and compare \gls{gcfn}'s estimate to the effect reported in~\citep{nunn2011slave}.

\paragraph{Experimental details}
For \gls{gcfn}, we let the control function $\hz$ be a categorical variable.
The encoder in \gls{vde}, $f_\theta$, is a 2-hidden-layer neural network $f_\theta$, which parametrizes a categorical likelihood $q_\theta(\hz = i \g \mbt=t,\mbepsilon=\epsilon)\propto \exp{(f_{\theta}(t,\epsilon, i))}$.
The decoder is also a 2-hidden-layer network; the reconstructed likelihood of ${\mbt}$ is different for different experiments.
In all experiments, the hidden layers in both encoder and decoder networks have $100$ units and use ReLU activations.
The outcome model is also a 2-hidden-layer neural network with ReLU activations.
For the simulated data, the hidden layers in the outcome model have $50$ hidden units.
In estimating the effect of slave export, the hidden layers in the outcome model have only $10$ hidden units; larger width resulted in overfitting.
Unless specified otherwise, we train on $5000$ samples with a batch size of $500$ for optimizing both \gls{vde} and the outcome model for $100$ epochs with Adam~\cite{kingma2014adam}.
In \cref{sec:decoder-structure} and~\cref{sec:partial-obs}, we evaluate effect estimates on a subset of the support of the treatment distribution where the most mass lies: $200$ equally spaced treatment values in $[-1,1]$.
We defer other details to~\cref{appsec:exps}.

All hyperparameters for \gls{vde}, except the mutual-information coefficient $\kappa=\lambda/(1 + \lambda)$, and the outcome-stage were found by evaluating the respective objectives on a held-out validation set.
In our experiments, we found that setting  $\kappa$ between $0.1-0.4$ worked best.
\gls{gcfn}'s performance was only mildly sensitive to changing $\kappa$ within this range.
However, one can tune $\kappa$ further by choosing the one which gives the control function $\hz_{\kappa}$ that results in the largest expected outcome likelihood on a heldout set.
This procedure relies on \gls{vde} and outcome objectives reaching optimum if and only if $\hz$ satisfies perfect reconstruction and marginal independence.
See~\cref{appsec:val} for further details.

\subsection{Simulations with specific decoder structure}\label{sec:decoder-structure}
We compare \gls{gcfn}'s performance against \gls{2sls}, \gls{cfn} and DeepIV and show that \gls{gcfn} outperforms these methods when the functional properties of the treatment process are known.
We consider two settings with continuous outcome, treatment, and confounders where the assumptions of \gls{2sls} and \gls{cfn} fail: 1) with an additive treatment process and a multiplicative outcome process and 2) with a multiplicative treatment process and an additive outcome process.
For both settings, the causal effect is the same $\E[\mby\g \DO(\mbt=t)] = t$.
The control function $\hz$ is set to have $50$ categories.
We report results for the mutual information coefficient $\kappa= \nicefrac{\lambda}{1+\lambda} = 0.1$.
We consider $3$ different strengths of confounding as captured by the parameter $\alpha\in [0.5,1.0,2.0]$.
\begin{figure}[t]
  \begin{minipage}{.3\textwidth}
    \centering
    \includegraphics[width=\columnwidth]{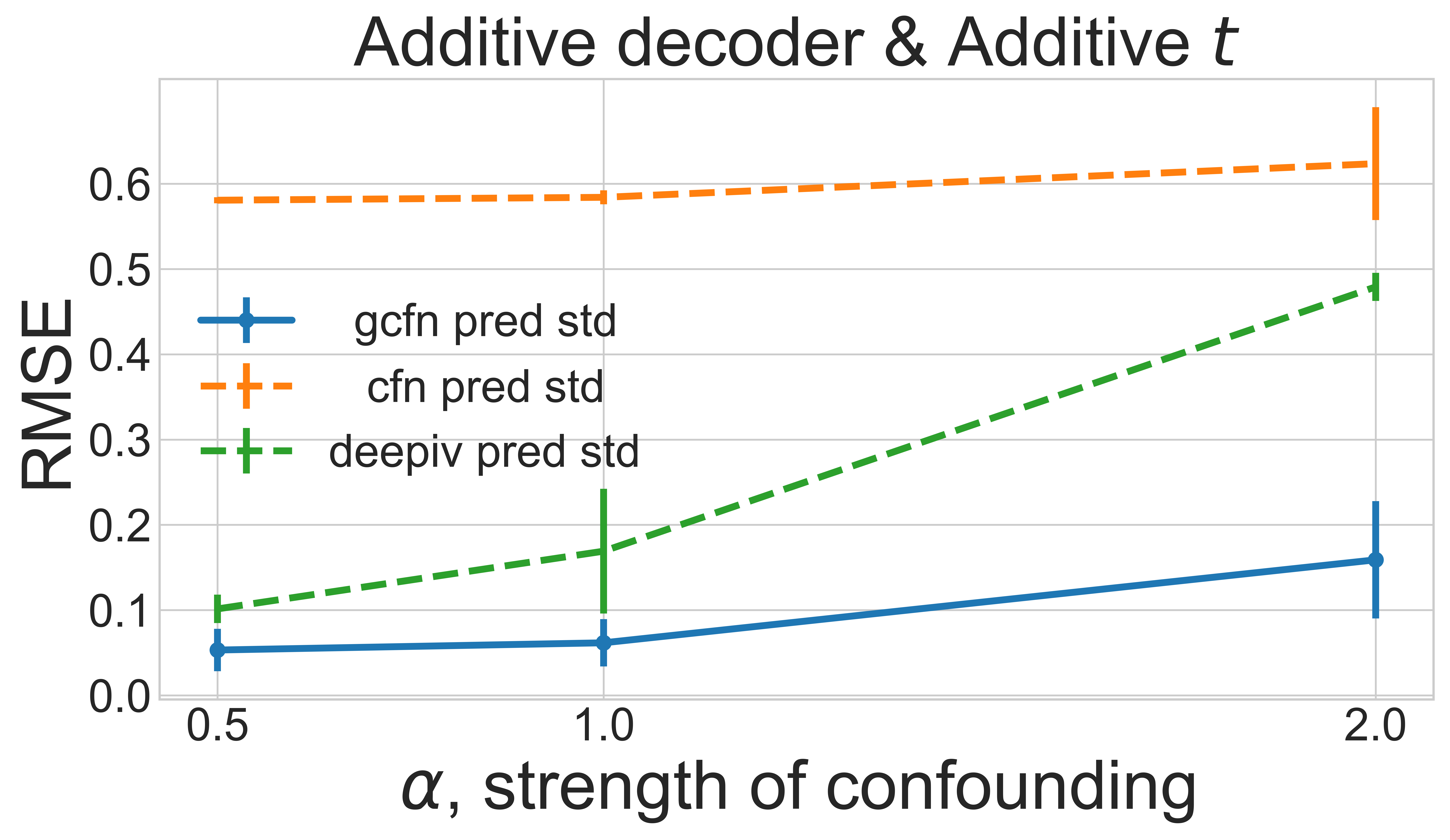}
    \caption{\gls{gcfn} obtains better effect estimates than \gls{cfn} and DeepIV when the \emph{additive outcome process} assumption is violated.
    }\label{fig:exps}
  \end{minipage}
  \hspace{0.2cm}
  \begin{minipage}{0.3\textwidth}
    \centering
    \includegraphics[width=1\columnwidth]{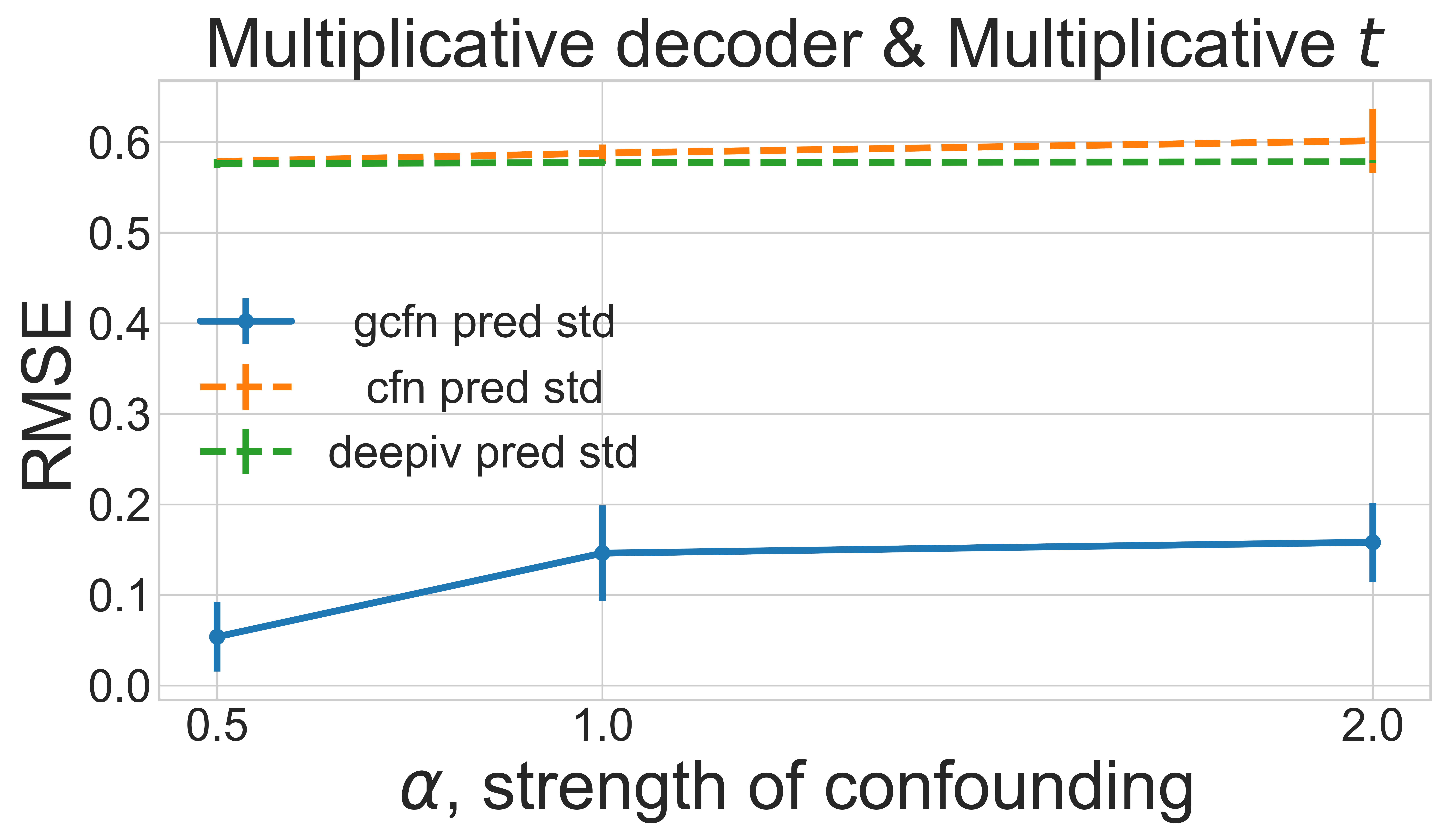}
    \caption{\gls{gcfn} produces better effect estimates than \gls{cfn} and DeepIV when the \textit{additive treatment} process assumption is violated.
    }\label{fig:mul-noise}
  \end{minipage}
  \hspace{0.2cm}
  \begin{minipage}{0.35\textwidth}
    \centering
    \vspace{-13pt}
    \includegraphics[width=1.02\columnwidth]{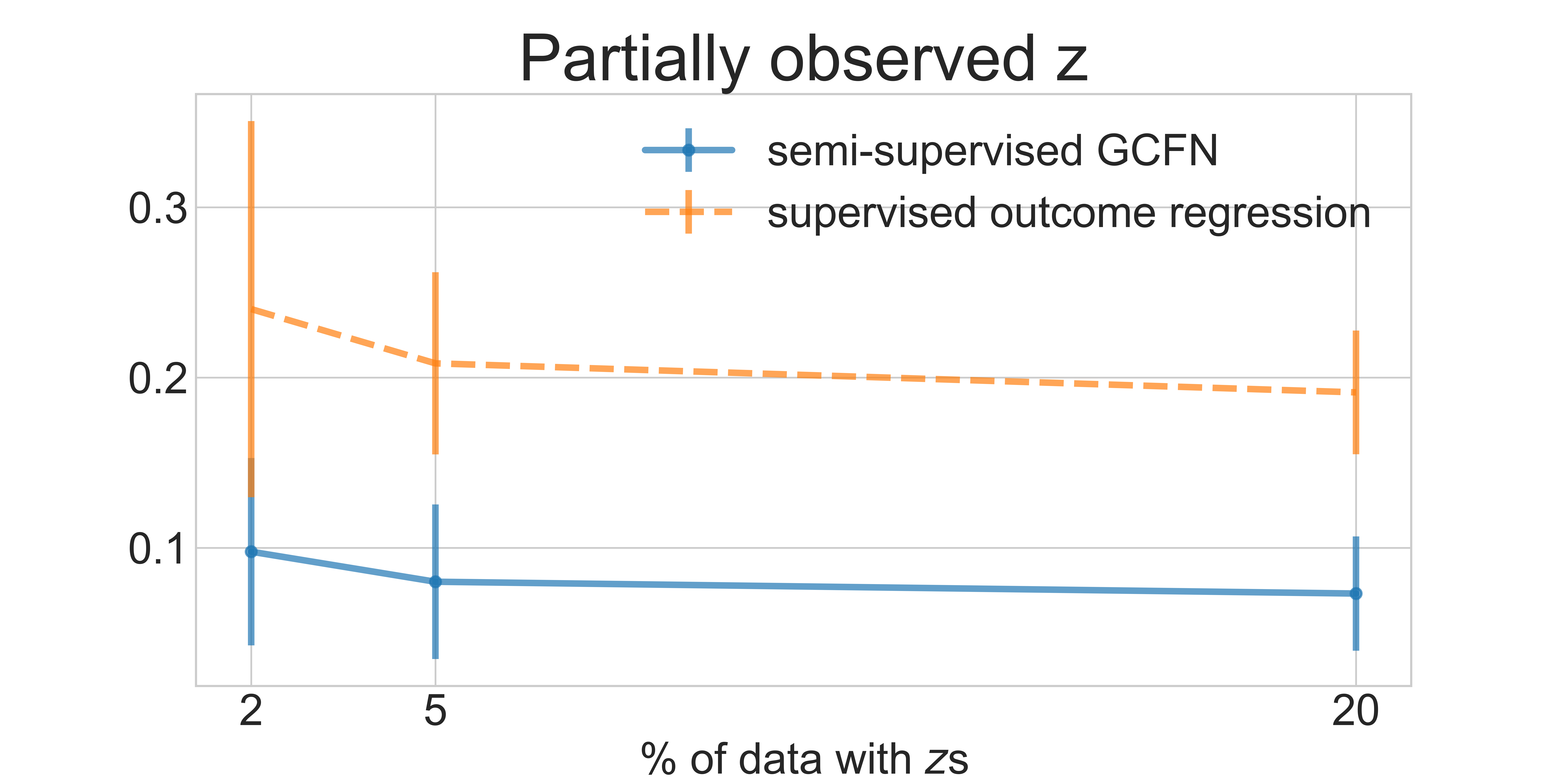}
  \caption{
  Mean RMSE of causal effects of the \gls{gcfn}-predicted causal effects versus percentages of samples with $\mbz$ observed.
  }\label{fig:mixed-exp}
  \end{minipage}
  \vspace{-12pt}
 \end{figure}

\paragraph{Multiplicative outcome \& Additive treatment}
With $\mathcal{N}$ as the normal distribution, we generate
$ \,\, \mbz, \mbepsilon \sim \cN(0,1),\,\, \mbt = (\mbz + \mbepsilon)/\sqrt{2},\,\,
\mby \sim \cN(\mbt + \alpha \mbt^2 \mbz, 0.1),\, $
wher $\alpha$ controls confounding; larger magnitude of $\alpha$ means more confounding.
The generation process above
violates the linear noise relation assumption, $\E[\mbeta_{\mby}|\mbeta_{\mbt}] \propto \mbeta_{\mbt}$, that \gls{cfn} requires \citep{guo2016control}. \gls{gcfn}, on the other hand, does not require this assumption.
In this experiment, \gls{vde} has an additive decoder which specifies a Gaussian reconstruction likelihood: ${\mbt} \sim \cN(h^\prime_{\phi}(\hz) + g^\prime_\phi(\mbepsilon),1)$.
In \Cref{fig:exps}, we compare \gls{gcfn} to \gls{cfn} and DeepIV, and show that \gls{gcfn} produces the best causal effect estimates.
Unlike the others, \gls{gcfn} can adjust for confounding when the outcome process is not additive.
Averaged over all $\alpha$s, \gls{gcfn} outperforms the baselines with an  RMSE of $\boldsymbol{0.09 \pm 0.06}$ compared to \gls{cfn}'s $\boldsymbol{0.58 \pm 0.01}$, \gls{2sls}'s $\boldsymbol{0.55\pm 0.58}$, and DeepIV's $\boldsymbol{0.25\pm 0.17}$.

\paragraph{Multiplicative treatment \&  Additive outcome.}
For this simulation, we generate data as follows:
$\,\, \mbz, \mbepsilon \sim \cN(0,1),\,\, \mbt = \mbz\mbepsilon, \,\, \mby \sim \cN(\mbt + \alpha \mbz, 0.1).$
In this experiment, \gls{vde} has a \textit{multiplicative} decoder which specifies a gaussian reconstruction likelihood with $\mbt = \cN( h^\prime_{\phi}(\hz)g^\prime_\phi(\mbepsilon),1)$.
The \gls{2sls} method uses a linear model $\mbt  = \beta\mbepsilon + \mbeta_{\mbt}$ which will correctly estimate $\E[\mbt\g \mbepsilon]=0$ in our generation process.
\Cref{fig:mul-noise} shows that \gls{gcfn} out-performs \gls{cfn} and DeepIV and is robust to different strengths of confounding ($\alpha \in\{0.5,1,2\}$).
Averaged over all $\alpha$s, \gls{gcfn} outperforms the baselines with an  RMSE of $\boldsymbol{0.13 \pm 0.08}$ compared to \gls{cfn}'s $\boldsymbol{0.58 \pm 0.02}$, \gls{2sls}'s $\boldsymbol{0.55\pm 0.56}$, and DeepIV's $\boldsymbol{0.58\pm 0.01}$.
We omit \gls{2sls} from~\cref{fig:exps} because it performs strictly worse than DeepIV, its deep variant.
DeepIV gives effect-estimates that are close to $0$.
We justify this in~\cref{subsec:deepiv}.

\subsection{\gls{gcfn} with confounders observed on a subset}\label{sec:partial-obs}

In this experiment, we demonstrate that semi-supervised \gls{gcfn} does not need outcome or treatment process assumptions if the confounder $\mbz$ is observed on a subset of the data.
Let $\rho$ be the fraction with $\mbz$ observed and $\cB$ be the Bernoulli distribution.
We generate a mask $\mbm\sim\cB(\rho)$ and data
 $ \,\, 
 \mbepsilon, \mbz\sim\cN(0,1), \quad  \mbt = \mbepsilon \mbz, \quad\, \mby \sim \cN(\mbt + \mbt\mbz,0.1).
 $
Let $\mbz' = \mbz*\mbm$. We observe $(\mby,\mbt,\mbepsilon,\mbz',\mbm)$.
The structurally unrestricted decoder uses a categorical reconstruction likelihood: $p_\phi(\mbt=j\g\hz=z,\mbepsilon=\epsilon) \propto \exp{(g_{\phi}(z,\epsilon,j))}$.
The treatment $\mbt$ is discretized into $50$ bins.
The intervals $[-\infty, -3.5]$ and $[3.5, \infty]$ correspond to one bin each and the interval $[-3.5, 3.5]$ is split into 48 equally-sized bins.
This suffices because few samples fall outside $[-3.5,3.5]$.
For semi-supervised \gls{gcfn}, \gls{vde}'s objective has an additional term defined on the samples with observed $\mbz$'s (\cref{eq:semi-gcfn}). 
The confounder $\mbz$ is split into bins the same way as the treatment.
The additional term for the $i^{th}$ sample is the categorical log-likelihood of the observed $(t_i, \epsilon_i,z_i)$ with respect to the encoder-specified distribution: $q(\hz = z_i \g \mbt=t_i,\mbepsilon=\epsilon_i) \propto \exp(f_\theta (t_i,\epsilon_i,z_i))$. We set
the scaling $\zeta$ on this additional term to be $0.5$.
We report results for $\kappa= 0.1$.
For other $\kappa \in \{0.2, 0.3\}$, results were similar or better.

We compare semi-supervised \gls{gcfn} against regression
with the same outcome model as the baseline, trained only on samples with the confounder observed. We estimated
this ``supervised'' baseline in the same manner as the outcome stage
of \gls{gcfn}.
\Cref{fig:mixed-exp} plots the RMSE of the predicted causal effects vs.\ percentage of samples with observed $\mbz$'s in \cref{fig:mixed-exp}. 
If the data has $2\%$ or more samples with the confounder observed, \gls{gcfn} estimates effects better than the supervised baseline.

\subsection{\gls{gcfn} on high-dimensional Covariates}\label{sec:mnist-exp}
In this experiment, we evaluate \gls{gcfn} on a non-linear simulation given in 
\citet{DBLP:conf/icml/HartfordLLT17} to demonstrate that DeepIV improves upon \gls{2sls}.
Their generation models the effect of price ($\mbt$) on sales ($\mby$), given customer covariates ($\mbx$, MNIST image), and time $s$; they use fuel price as an \gls{iv}.
The outcome is generated using the label of the MNIST image, which denotes customer price sensitivity.
The data generation process for $\mbt$ is additive in \gls{iv} and confounder.
Following this, we use the same additive decoder in \gls{vde} as in~\cref{sec:decoder-structure}, but with time $s$ as an additional input.
We give further experimental details and~\citet{DBLP:conf/icml/HartfordLLT17}'s data generating process in~\cref{appsec:mnist-exp}.

We report effect MSE on a fixed \gls{oos}.
We compare against \citet{DBLP:conf/icml/HartfordLLT17}'s reported results for two sample sizes, $10,000$ and $20,000$.
DeepIV's reported results exclude a few large effect MSE outliers; we do not exclude such errors for \gls{gcfn}.
We report \gls{gcfn}'s performance over $10$ seeds.
Overall, \gls{gcfn} performed on par or better than DeepIV.
First, we report \gls{gcfn}'s effect MSE with $\kappa=0.2$.
For $10,000$ samples, \gls{gcfn} produced effect MSEs that ranged in $[\boldsymbol{0.30- 0.42}]$, better  than DeepIV's reported range of around $[\boldsymbol{0.30-0.50}]$ (which is almost twice as large).
For $20,000$ samples, \gls{gcfn}'s effect MSE range improved to $[\boldsymbol{0.25-0.40}]$ while DeepIV reported a performance of around $[\boldsymbol{0.25-0.45}]$.
For both sample sizes, we note that $\kappa=0.1,0.3$ gave similar results.
To see this, for $20,000$ samples, averaged over $10$ seeds, \gls{gcfn} achieved a mean effect MSE of $\boldsymbol{0.305}$ or better for any $\kappa\in \{0.1, 0.2, 0.3\}$, beating DeepIV's $\boldsymbol{0.32}$.

\subsection{\gls{gcfn} on high-dimensional \glspl{iv}}\label{sec:mnist-iv-exp}

In this experiment, we evaluate \gls{gcfn} on data with a high-dimensional \gls{iv}.
\citet{bennett2019deep} use the following data generating process to demonstrate DeepGMM~\citep{bennett2019deep} improves upon existing methods:
  $ \,\, \mbeps \sim \cU [-3,3] \quad \mbz \sim \cN(0,1) \quad \mbt \sim \cN(\mbz + \mbeps, 0.1 ) \quad \mby = \cN\left(|\mbt| + \mbz,0.1\right).$ 
However, the scalar $\mbeps$ is not directly observed.
Instead, $\mbeps$ is mapped to a digit $\{0, \ldots, 9\}$ and a corresponding MNIST image $\mbeps_M$ is given as the \gls{iv}.
To estimate effects well with such an \gls{iv}, any method must learn to label the MNIST image.
In this setting, \gls{vde}'s encoder and decoder both take an embedding $\ell_\gamma(\mbeps_M)\in \mathbb{R}^{10}$ as input.
The embedding $\ell_\gamma$ is trained in \gls{vde} along with the encoder and decoder.
Respecting the additive treatment process, we specify an additive decoder.

We ran \gls{gcfn} with $10$ different random seeds and report results for $\kappa=0.3$, chosen based on mean test outcome MSE
($0.136 \pm 0.008)$.
\gls{gcfn} performs competitively with an effect MSE of $\boldsymbol{0.077 \pm 0.022}$ compared to DeepGMM's $\boldsymbol{0.07 \pm 0.02}$ and DeepIV's $\boldsymbol{0.11 \pm 0.00}$, both as reported in~\cite{bennett2019deep}.
Effect MSE for $\kappa \in\{0.2, 0.4\}$ were similar and within standard error of DeepGMM's performance. See~\cref{appsec:mnist-iv-exp} for further experimental details and results.

\subsection{The Effect of Slave Export on Trust}
We demonstrate the recovery of the causal effect of slave export on the trust in the community~\citep{nunn2011slave}.
\citet{nunn2011slave} pooled surveys and historical records to get sub-ethnicity and tribe level data from the period of slave trade.
The data was used to study the long-term effects of slave-trade, measured in the 2005 Afrobarometer survey.
We predict the effect of the treatment $\mbt=$\textbf{ln(1 + slave-export/area)} on the outcome of interest, $\mby=$\textbf{trust in neighbors}.
The dataset has $6932$ samples with $59$ features.
After filtering out missing values, we preprocessed $46$ covariates and \gls{iv} to have mean $0$ and maximum $1$, and the treatment $\mbt$ to lie in $[0,2]$.
The authors claim that the distance to sea cannot causally affect how individuals trust each other, but it affects the chance of coming in contact with colonial slave-traders and being shipped to the Americas, making it an \gls{iv}.
They control for urbanization, fixed effects for sophistication, political hierarchies beyond community, integration with the rail network, contact with European explorers, and missions during colonial rule.

For this experiment, \gls{vde}'s decoder $g_\phi$ specifies a categorical reconstruction likelihood as $p_\phi(\mbt=i\g\hz=z,\mbepsilon=\epsilon) \propto \exp{(g_{\phi}(z,\epsilon,i))}$.
Each category of the treatment corresponds to one of 50 equally-sized bins in the interval $[0,2]$.
\citet{nunn2011slave} use a linear model for the outcome $\mby$ and use the distance to sea as an \gls{iv} for each community.
We also use a partially linear model $\mby=\beta \mbt + h_\theta(\hz)$ so that the effect we recover is of comparable nature to the effect reported in the paper.
The outcome network $h_\theta$ has 2 layers with 10 hidden units each and ReLUs.

Averaged over $4$ mutual information coefficients $\kappa$ and $5$ random seeds, \gls{gcfn}'s estimate of $\beta$ was $\boldsymbol{-0.21 \pm 0.04}$ compared with $\boldsymbol{-0.27\pm 0.10}$, as reported by \citet{nunn2011slave}.

\section{Discussion and Future}\label{sec:future}
\vspace{-3pt}
In this paper, we characterize general control functions for causal estimation.
General control functions allow for effect estimation without structural outcome process assumptions like \gls{2sls} or \gls{cfn}.
The key challenge in building general control functions is ensuring joint independence between the \gls{iv} and the control function and (unobserved) true confounder.
Joint independence can be guaranteed via structural treatment process assumptions, like additivity or monotonicity.
We develop the \glsreset{gcfn}\gls{gcfn} to build general control functions and estimate effects with them.
Further, we develop semi-supervised \gls{gcfn} which uses confounders observed on a subset of the data to construct general control functions without treatment process assumptions.
Finally, we consider imperfect estimation of the general control function and bound average error in effects using quantities optimized in \gls{vde}.

\vspace{-3pt}
\paragraph{Tradeoffs with assumptions.}
In causal estimation, parametric assumptions can be traded-off with assumptions of strength of \gls{iv} or positivity.
Consider a setting where $\mbepsilon$ is binary.
For every possible confounder value, only two values of the treatment are observed.
Thus it is impossible to estimate a quadratic function of $\mbt$ for each fixed value of the confounder.
This means $\mby\g \mbt$ is not identified without strong assumptions like linearity in $\mbt$. Incorporating outcome properties, like the conditional independence 
$\mby\indep \mbepsilon\g \mbt,\mbz$, 
into control function estimation would
be a fruitful direction.

\section*{Broader Impact}
Our work applies to causal inference where strong \glspl{iv} are available to help adjust for confounding, such as in problems in healthcare and economics.
We assess the impact of our work in the context of these fields.
In general, loosening functional assumptions like \gls{gcfn} does, helps estimate effects better.
Better effect estimates help improve planning patient treatment and understanding policy impact.
However, the strong \gls{iv} assumption may not hold for all demographics.
If this occurs, demographics for which the assumption holds will have better quality effect estimates than for demographics where the assumption does not hold.
This could mean that certain demographics receive better care in hospitals or have implemented policy be more impactful on them.
Such issues could be characterized by evaluating the positivity of treatment with respect to the constructed control function in different demographics.

\section*{Acknowledgements}

The authors were partly supported by NIH/NHLBI Award R01HL148248, and by NSF Award 1922658 NRT-HDR: FUTURE Foundations, Translation, and Responsibility for Data Science.
The authors would like to thank Xintian Han and the reviewers for thoughtful feedback.

\bibliographystyle{apalike}
\bibliography{bib}
\newpage
\appendix

\onecolumn
\begin{center}
\large{Supplementary Material for \gls{vde} and \gls{gcfn}}
\end{center}

\section{Theoretical Details and Proofs}\label{appsec:theory}
\paragraph{Notation} We use the expectation operator in different contexts in the proof. $\E_q$ denotes expectation with respect to the density $q$ and $\E_{\mbz}$ denotes expectation with respect to the density of the random variable $\mbz$. When the density function or the random variable are clear from the context, we drop the subscript and use $\E$.
\subsection{The general IV causal graph with covariates/observed confounders}
\begin{figure}[!htbp]
      \begin{center}
        \begin{tikzpicture}
        \node[state,fill=gray] (z) at (2.5,1) {$\mbz$};
        \node[state] (x) at (1,1) {$\mbx$};
        \node[state] (e) at (0,0) {$\mbepsilon$};
        \node[state] (t) [right =of e] {$\mbt$};

        \node[state] (y) [right =of t] {$\mby$};

        \path (e) edge (t);
        \path (x) edge (t);
        \path[dashed] (z) edge (t);
        \path (t) edge (y);
        \path (x) edge (y);
        \path (x) edge (e);
        \path[dashed] (z) edge (y);
        \path[dashed] (z) edge (t);
        \path[bidirected] (x) edge[bend left=60] (z);
    \end{tikzpicture}
      \end{center}
      \caption{Causal graph with hidden confounder $\mbz$, outcome $\mby$, \gls{iv} $\mbepsilon$, treatment $\mbt$ and covariates $\mbx$.}\label{fig:full-iv-graph}
      \vspace{-5pt}
\end{figure}
\Cref{fig:full-iv-graph} is the general version of the \gls{iv} problem where the instrumental variable property holds true after conditioning on $\mbx$. This is sometimes called a conditional instrument. All our proofs and results carry over to the situation with covariates after conditioning all estimables and distributions on $\mbx$. \gls{vde} in this setting with covariates is re-written as:
\begin{align}
\max_{\theta,\phi}&\,  \E_{F(\mbt,\mbepsilon,\mbx)}\E_{q_{\theta }(\hz\g \mbt,\mbepsilon,\mbx)}\log p_{\phi }(\mbt\g \hz, \mbepsilon,\mbx)  - \lambda\mathbf{I}_\theta(\hz; \mbepsilon | \mbx)
\end{align}
\subsection{Mutual Information lower bound}\label{appsec:milb}
\def\qt{q_{\theta}}
\def\pp{p_{\phi}}
Here, we show the full derivation of the lower bound for negative mutual-information.
We derive the lower bound for the general case where there are both observed and unobserved confounders.
A simple lower bound can be obtained by using $\Hb(\hz\g \mbepsilon,\mbx)\geq \Hb(\hz\g \mbepsilon,\mbt,\mbx )$, but this cannot be made tight unless $\mbepsilon$ completely determines $\mbt$.
Therefore, we cannot guarantee independence unless the data at hand is not confounded. 
Instead we introduce two auxiliary distributions $r_\nu(\hz\g \mbx)$ and $\pp(\mbt\g \mbepsilon, \hz, \mbx)$, following the work in variational inference
~\citep{ranganath2016hierarchical,agakov2004auxiliary,salimans2014markov,maaloe2016auxiliary} and causal inference~\citep{ranganath2018multiple}.

We let $F(\mbt,\mbx,\mbepsilon,\mby)$ be the true data distribution and $\qt(\hz\g \mbt,\mbepsilon, \mbx=x)$ be the control function distribution.
We overload notation and also use $\qt$ to refer to any distribution that involves operations with $\qt(\hz\g \mbt,\mbepsilon, \mbx=x)$. We use $\myeq$ to denote that the LHS and RHS are equal up to constants that are ignored during optimization.
In the following, both $\Hb(\mbt,\mbepsilon\mid \mbx=x)$, $\Hb(\mbepsilon|\mbx=x)$ are constants with respect to the parameters of interest $\phi, \theta,\nu $ and we will drop them from the lower bound when encountered.
For a given $\mbx=x$, we lower-bound the negative instantaneous conditional mutual information:
\allowdisplaybreaks{}
\begin{align*}
-\lambda&\mathbf{I}(\hz;\mbepsilon \g  x )   = -\lambda \KL{\qt(\hz,\mbepsilon\g x)}{\qt(\hz\g x)F(\mbepsilon\g x)}
\\
    = & -\lambda \left[\E_{\qt(\mbepsilon,\hz\g x)}\left[\log \qt(\mbepsilon \g  \hz,x) - \log F(\mbepsilon\g x)\right]\right]
                                    \\
    = & -\lambda \left[\E_{\qt(\mbepsilon,\hz\g x)}\left[\log \qt(\mbepsilon \g  \hz,x)\right] + \Hb(\mbepsilon\g x)\right]
                                    \\
    \myeq & -\lambda \big[\E_{\qt(\mbepsilon,\hz\g x)} \big[ \KL{\qt(\hz\g x)}{\qt(\hz\g x)}
                                    +\KL{\qt(\mbt\g \mbepsilon,\hz,x)}{\qt(\mbt\g \mbepsilon,\hz,x)} +  \log \qt(\mbepsilon \g  \hz,x)\big]\big]
                                    \\
    \geq & -\lambda \big[\E_{\qt(\mbepsilon,\hz\g x)} \big[\KL{\qt(\hz\g x)}{r_{\nu}(\hz\g x)}  + \KL{\qt(\mbt\g \mbepsilon,\hz,x)}{\pp(\mbt\g \mbepsilon,\hz,x)} + \log \qt(\mbepsilon \g  \hz, x ) \big]\big]
                                    \\
    = &-\lambda \big[\E_{\qt(\mbepsilon,\hz\g x)}\big[ \log \left[\qt (\hz,\mbepsilon \g x) \right]  + \E_{\qt(\mbt\g \mbepsilon,\hz,x)}\log \qt (\mbt \g  \mbepsilon,\hz,x) 
                                    \\ & \quad 
                                    \quad -\E_{\qt(\hz\g x)}\log r_{\nu}(\hz \g x)
                                     - \E_{\qt(\mbt,\mbepsilon,\hz\g x)}\log p_{\phi}(\mbt \g  \mbepsilon,\hz, x)\big]                                     \\
    = & -\lambda \big[\E_{\qt(\mbepsilon,\hz,\mbt\g x)} \log \left[\qt(\hz,\mbepsilon, \mbt \g x)\right] -\E_{\qt(\hz\g x)}\log r_{\nu}(\hz\g x) - \E_{\qt(\mbt,\mbepsilon,\hz\g x)}\log p_{\phi}(\mbt \g  \mbepsilon,\hz,x)\big]
                                    \\
    = & -\lambda \big[\E_{F(\mbt,\mbepsilon\g x)}\E_{\qt(\hz\g \mbepsilon,\mbt,x)} \log \left[\qt(\hz\g \mbt,\mbepsilon,x) - \log \pp(\mbt \g  \mbepsilon,\hz,x)\right] - \Hb(\mbt,\mbepsilon \g x)
                                   \\
                                   & \quad \quad  -\E_{\qt(\hz\g x)}\log r_{\nu}(\hz\g x) \big]
                                    \\
    \myeq & -\lambda\E_{F(\mbt,\mbepsilon\g x)}\left[\KL{q_\theta(\hz\g \mbt,\mbepsilon,x)}{r_{\nu}(\hz\g x)} - \E_{\qt(\hz\g \mbepsilon,\mbt,x)}\log \pp (\mbt \g  \mbepsilon,\hz,x) \right],
\end{align*}

where the hidden term $ -\lambda \left[\Hb(\mbepsilon\g \mbx=x) -  \Hb(\mbt,\mbepsilon\g \mbx=x)\right]$ is a constant for a given instance of the problem. We do not need access to the distribution $\mbt,\hz,\mbepsilon \g  \mbx=x$ because the information that we lower bounded, $\mathbf{I}(\hz;\mbepsilon\g \mbx=x)$, is averaged over $\mbx=x$ in our objective.
Recall that $\pp(\mbt\g \mbepsilon,\hz,\mbx=x)$ is the reconstruction term in \gls{vde}.
This lower bound is tight when the introduced KL terms are 0, which occurs when $r_\nu(\hz \g  \mbx=x)= \qt(\hz\g \mbx=x)$ and $p_{\phi}(\mbt\g \mbepsilon,\mbx=x,\hz)= \qt (\mbt\g \mbepsilon,\mbx=x,\hz)$.
This means that if the models $\pp,r_\nu$ are rich enough, the gap between the lower bound and mutual information can be optimized to be zero.
The second term $\E_{\qt(\hz\g \mbepsilon,\mbt,\mbx=x)}\log \pp (\mbt \g  \mbepsilon,\hz,\mbx=x) $ is the same as the reconstruction likelihood.
Thus substituting the lower bound into the full objective with given covariates gives
\begin{align*}
\E_{F(\mbt,\mbepsilon,\mbx)}\left[ (1 + \lambda)\E_{\qt(\hz\g \mbt, \mbepsilon,\mbx)}\log \pp (\mbt \g  \mbepsilon,\hz,\mbx)  -\lambda\KL{q_\theta(\hz\g \mbt,\mbepsilon,\mbx)}{r_{\nu}(\hz\g \mbx)} \right]
\end{align*}

\paragraph{Optimization for \glsreset{vde}\gls{vde}.}
The \gls{vde} optimization involves the expectations of distributions with parameters with respect to a distribution that also has parameters.
For distributions that are not being integrated against, we can move the gradient inside the expectation.
For distributions that are integrated against, score-function methods provide a general tool to compute stochastic gradients;~\citet{glasserman2013monte,williams1992simple,ranganath2014black,mnih2014neural}.
In our experiments, we let the control function be a categorical variable.
This allows us to marginalize out the control function and compute the gradient.

\subsection{Proof of Theorem 1}
\setcounter{thm}{0}
\setcounter{prop}{0}
\begin{thm} \textbf{(Meta-identification result for general control functions)}\\
\theoremone{}
\end{thm}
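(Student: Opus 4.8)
The plan is to show that ignorability $q(\mby \mid \mbt = t, \hz) = q(\mby \mid \DO(\mbt = t), \hz)$ follows from the structural fact that, conditioned on $\hz$, the treatment $\mbt$ carries no information about the confounder beyond what $\mbepsilon$ contributes, and $\mbepsilon$ is independent of $(\mbz, \mbdelta)$ given $\hz$. First I would set up the potential-outcome object: write $\mby$ as a deterministic function $\mby = m(\mbt, \mbz, \mbeps, \mbeps_\mby)$ of the treatment, confounder, instrument (which only enters $\mby$ through $\mbt$, per the graph in \Cref{fig:iv-graph}), and exogenous outcome noise $\mbeps_\mby \indep$ everything. The potential outcome $\mby_t$ is then $m(t, \mbz, \mbeps, \mbeps_\mby)$ — but since in the graph $\mbepsilon$ only influences $\mby$ through $\mbt$, we get $\mby_t$ depends only on $(\mbz, \mbeps_\mby)$. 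So it suffices to prove $\mby_t \indep \mbt \mid \hz$, and for that it is enough to show $(\mbz, \mbeps_\mby) \indep \mbt \mid \hz$.

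The key step is the following chain. By reconstruction (A1), conditionally on $\hz$ the treatment is a deterministic function $\mbt = d(\hz, \mbeps)$, so $\mbt \mid \hz$ is measurable with respect to $\mbeps \mid \hz$; hence it suffices to show $(\mbz, \mbeps_\mby) \indep \mbeps \mid \hz$. Since $\mbeps_\mby$ is exogenous we may fold it in and reduce to $\mbz \indep \mbeps \mid \hz$ (and $\mbdelta \indep \mbeps \mid \hz$ if noise $\mbdelta$ is present). This is exactly what joint independence (A2), $\mbeps \indep (\mbz, \hz, \mbdelta)$, gives: joint independence of $\mbeps$ from $(\mbz, \hz, \mbdelta)$ implies the conditional independence $\mbeps \indep (\mbz, \mbdelta) \mid \hz$ by the standard decomposition/weak-union property of conditional independence. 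Combining, $\mby_t \indep \mbt \mid \hz$, which is ignorability. I would then spell out the measure-theoretic version carefully since $\mbt$ may be continuous: the statement is that for a.e.\ $t \in \supp(\mbt)$ and $q(\hz)$-a.e.\ $\hz$, the conditional law of $\mby$ given $\mbt = t, \hz$ equals the conditional law of $\mby_t$ given $\hz$.

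For positivity, I would argue: fix compact $B \subseteq \supp(\mbt)$ and the constant $c_B$ from (A3). For $q(\hz)$-a.e.\ $\hz$ with $q(\hz) > 0$, using A2 (so that the conditional law of $(\mbz, \mbdelta)$ given $\hz$ is absolutely continuous w.r.t.\ its marginal, or at least supported appropriately) together with $\mbt = d(\hz, \mbeps)$ and the strong-IV lower bound $F'(\mbt = t \mid \mbz, \mbdelta) \geq c_B$, conclude $q(\mbt = t \mid \hz) = \int F'(\mbt = t \mid \mbz, \mbdelta)\, q(\mbz, \mbdelta \mid \hz) \geq c_B > 0$ for a.e.\ $t \in B$; then let $B$ exhaust $\supp(\mbt)$. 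Finally, with ignorability and positivity in hand, the iterated-expectation identity $\E_{\hz}[\mby \mid \mbt = t, \hz] = \E_{\hz}[\mby \mid \DO(\mbt = t), \hz] = \E_{\hz} \E_{q(\mbz \mid \hz)} \E[\mby \mid \mbt = t, \mbz] = \E_{\mbz}\E[\mby \mid \mbt = t, \mbz] = \E[\mby \mid \DO(\mbt = t)]$ follows, using that $q(\mbz \mid \hz)$ averaged over $q(\hz)$ returns the true marginal $F(\mbz)$ (again by the joint distribution structure) and the footnote's existence caveat.

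The main obstacle I anticipate is the measure-theoretic bookkeeping connecting ``$\mbt$ is a deterministic function of $\hz, \mbeps$'' to the conditional-independence reduction when $\mbt, \mbz, \hz$ may all be continuous and $\hz$ is only a stochastic (possibly point-mass-at-optimum) function of $(\mbt, \mbeps)$ — one must be careful that conditioning on $\hz$ is well-defined, that the reconstruction map $d$ can be chosen jointly measurable, and that the ``a.e.'' qualifiers in A1 and A3 propagate correctly through the integrals. The conceptual content (joint independence $\Rightarrow$ the two conditional independencies $\Rightarrow$ ignorability + positivity) is short; the care is entirely in making the deterministic-reduction and positivity integrals rigorous for general (non-discrete) treatments.
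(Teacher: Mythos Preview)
Your proposal is correct and follows essentially the same route as the paper: use A2 to get $\mbeps \indep (\mbz,\mbdelta) \mid \hz$, then A1 to transfer this to $\mbt \indep \mbz \mid \hz$ and hence $\mby_t \indep \mbt \mid \hz$; for positivity, expand $q(\mbt \mid \hz)$ as $\int F'(\mbt \mid \mbz,\mbdelta)\,q(\mbz,\mbdelta \mid \hz)$ and apply the A3 lower bound. Your measure-theoretic caveats are more explicit than what the paper spells out, but the conceptual skeleton is identical.
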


We prove this for the setting without covariates. The proof adapts to the setting with covariates (observed confounders) by conditioning all terms on them.

\begin{proof} (\Cref{thm:intro})
The proof shows that reconstruction (A1) and joint independence (A2) together imply ignorability, and strong \gls{iv} (A3) together with the joint independence (A2) imply positivity.

\paragraph{Ignorability.}

To establish ignorability we need to show that $\mby_{t} \indep \mbt\g \hz$ where $\mby_{t}$ is the potential outcome for a unit when the treatment given is $\mbt=t$.
The outcome $\mby$ is constructed from the potential outcomes by indexing the one $\mby_{t^*}$ corresponding to the observed treatment $\mbt=t^*$.

By assumption A2, we have the joint independence $\mbepsilon\indep (\mbz,\hz)$ which implies
 \[\mbepsilon \indep (\mbz,\hz)\,\ \implies \mbepsilon \indep \mbz \g \hz = \zhat\quad \forall \zhat \in \supp(\hz). \]
Note that by the reconstruction property (from assumption A1) $\mbt = d(\hz,\mbepsilon)$.
So given $\hz$, $\mbt$ is purely a function of $\mbepsilon$.
Thus, given $\hz$, $\mbt$ satisfies the same conditional independence as $\mbepsilon$: $\mbepsilon \indep \mbz\g \hz$.
Using this, we have
\[\mbepsilon \indep \mbz\g \hz \implies d(\hz,\mbepsilon) \indep \mbz \g \hz \implies \mbt \indep \mbz\g \hz. \]

The potential outcome $\mby_t$ depends only on $\mbz$ and some noise $\mbeta$ that is jointly independent of all other variables. This means for some function $m_t$ such that $\mby_t = m_t(\mbz,\mbeta)$.
\[ \mbt \indep \mbz\ \g \hz \implies \mbt\indep  m_t(\mbz,\mbeta) \g \hz \implies \mbt \indep \mby_t \g \hz. \]
This shows ignorability.

\paragraph{Strength of \gls{iv} and Positivity.}

Positivity means that for almost every $t\in \supp(\mbt)$,
\[q(\hz)>0, \implies q(\mbt=t\g\hz)>0.\]
We start with $q(\mbt\g\hz)$ and expand  it as an integral over the full joint.
\begin{align}\label{eq:positivity}
\begin{split}
    q(\mbt \g \hz) &
                = \int q(\mbt\g \mbz=z,\hz,\mbepsilon=\epsilon,\mbdelta=\delta, \mbt ) q(\mbepsilon=\epsilon\g \mbz=z,\hz,\mbdelta=\delta) q(\mbz=z,\mbdelta=\delta \g \hz)dz d\delta d\epsilon  \\
                & = \int q(\mbt\g \mbz=z, \mbepsilon=\epsilon,\mbdelta=\delta) q(\mbepsilon=\epsilon\g \mbz=z,\hz,\mbdelta=\delta) q(\mbz=z,\mbdelta=\delta \g \hz)dz d\delta  d\epsilon
                \\
                & \quad \quad \{\text{by } \mbt = g(\mbz,\mbepsilon,\mbdelta)\}
                \\
                & = \int q(\mbt \g \mbz=z, \mbepsilon=\epsilon,\mbdelta=\delta) q(\mbepsilon=\epsilon \g \mbz=z, \mbdelta=\delta) q(\mbz=z,\mbdelta=\delta \g \hz)dz d\delta  d\epsilon
                \\
                & \quad \quad \{\text{by A2: } \mbepsilon \indep (\mbz,\hz,\mbdelta)\}
                \\
                & = \int \left[\int q(\mbt \g \mbz=z, \mbepsilon=\epsilon,\mbdelta=\delta) q(\mbepsilon=\epsilon \g \mbz=z, \mbdelta=\delta) d\epsilon \right]q(\mbz=z,\mbdelta=\delta \g \hz)dz d\delta
                \\
                & = \int F'(\mbt \g \mbz=z, \mbdelta=\delta)q(\mbz=z,\mbdelta=\delta \g \hz)dz d\delta
\end{split}
\end{align}
Note that $q(\mbz=z,\mbdelta=\delta\g \hz)$ is a valid density over $(\mbz=z,\mbdelta=\delta)$
\footnote{If $q(\mbz=z,\mbdelta=\delta\g \hz)=0$ everywhere then no pair $(\mbz=z,\mbdelta=\delta)$ maps to $\hz$ and $\hz$ cannot be observed and we cannot condition on it. But $\hz$ is constructed explicitly as part of the algorithm, so it's observed.
Thus $q(\mbz=z,\mbdelta=\delta\g \hz)$ is a valid conditional density.}.
Under assumption A3, for any compact set $B\subseteq \supp(\mbt)$ and for almost every $ t\in B$, 
\begin{align}\label{eq:positivity-last}
  \begin{split}
    q(\mbt =t \g \hz) & = \int F'(\mbt=t \g \mbz=z, \mbdelta=\delta)q(\mbz=z,\mbdelta=\delta \g \hz)dz d\delta
                        \\
                        & \geq c_B  \int q(\mbz=z,\mbdelta=\delta \g \hz)dz d\delta 
                        \\
                        & = c_B > 0
  \end{split}
\end{align}
However, almost every $t\in \supp(\mbt)$ is contained in some compact subset $B\subseteq \supp(\mbt)$.
Thus, \cref{eq:positivity-last} holds for almost every $t \in \supp(\mbt)$, meaning that positivity is satisfied.
\def\do{\text{do}}

\paragraph{Computing the causal effect.}
Given ignorability and positivity, the true causal effect (a.e.\ in $\supp(\mbt)$) is determined as a property of the distribution $q(\hz, \mbt, \mby)$ as follows:
\[\E_{q(\hz)}\E[\mby \g \hz,\mbt=t] = \E_{q(\hz)}\E[\mby \g \hz,\do(\mbt=t)] = \E[\mby\g \do(\mbt=t)]\]

\end{proof}

\paragraph{Assumptions for continuous $\mbt$.}
When $\mbt$ has non-zero density rather than non-zero probability given the general control function, the true expected outcome being continuous everywhere as a function of the treatment is a sufficient condition for the causal effect estimation for almost all treatment values.

\subsection{Marginal Independence does not imply joint independence}\label{app:marginal-v-joint}
Here, we build an example of a function
of two independent variables $\mba, \mbb$ that is marginally independent of both.
Let $1_e{}$ be one if $e$ is true and zero if not,
\begin{align*}
    & \mba,\mbb \sim \textrm{uniform}(0,1),\\
    & \mbc(\mba,\mbb) = 1_{\mba + \mbb > 1} (\mba + \mbb - 1) +  1_{\mba + \mbb \leq 1} (\mba + \mbb).
\end{align*}
First, $\mbc$ is marginally a  uniform variable.\footnote{
    $P(\mbc<x) = P(\mba+\mbb<x) + P(1< \mba + \mbb <1 + x)
             = 0.5(x^2 - 1) + 1 - 0.5(1-x)^2 = x$.}
The distribution $\mbc\g \mba=x$
can be obtained by translating the distribution of $\mbb$ up by $x$, then
translating the part greater than one down to zero, meaning $\mbc\g \mba$ is uniformly distributed.
Thus $p(\mbc\g \mba) = p(\mbc)$ meaning $\mbc\indep \mba$. However, $\mbc$ is a deterministic
function of $\mba$ and $\mbb$. Therefore, while $\mbc\g \mba$ is
uniformly distributed, $\mbc\g (\mba, \mbb)$ is a dirac-delta distribution,
meaning $p(\mbc\g \mba, \mbb) \not = p(\mbc \g \mba)$ implying $\mbc \nindep \mba \g \mbb$.
Note that $\mbb$ can be constructed back from $\mbc,\mba$ up to measure-zero as
$\mbb = \mbc - \mba$  if $\mbc>\mba$ and $\mbb = \mbc - \mba + 1$ if $\mbc \leq \mba$; i.e., $\mbc$ is almost everywhere invertible for each fixed $\mba=a$.

This construction with uniform random variables can be generalized
to other continuous distributions by inverse transform sampling.
Any marginal density of $\mba,\mbb$ can be bijectively mapped to a uniform density over $[0,1]$.
Then $\mbc$ can be computed as above and then $\mba,\mbb,\mbc$ can be bijectively mapped back; $\mbc$ could be mapped back with the CDF of $\mbb$.
Conditional dependence is unaffected by bijective transformations and therefore the issue remains.
Similar constructions exist with discrete random variables. In general, assumptions on the true data generating process will be needed to ensure joint independence.

\subsection{From additive treatment processes to joint independence}\label{appsec:add-dec}\label{appsec:add-dec}

Consider treatment processes of the form $\mbt = h(\mbz, \mbdelta) + g(\mbeps)$.
Let the reconstruction map be additive:
\begin{align*}
\mbt = h^\prime(\hz) + g^\prime(\mbeps).
\end{align*}
Consider the random variable $\mbt - \E[\mbt\g \mbeps]$ which is sampled as follows: $\mbeps\sim h(\mbeps), \mbz\sim h(\mbz), \mbdelta\sim h(\mbdelta)$ and $\mbt - \E[\mbt\g \mbeps]=h(\mbz, \mbdelta) + g(\mbeps) - \E_{\mbz, \mbdelta}[h(\mbz, \mbdelta) + g(\mbeps)]$.
We show that  $h^\prime(\hz)$ determines $h(\mbz,\mbdelta)$ by expressing the random variable $\mbt - \E[{\mbt\g \mbepsilon}]$ in terms of $\mbz,\mbdelta$ and $\hz$
\[h^\prime(\hz) - \E_{\hz}[h^\prime(\hz)] = \mbt - \E[\mbt\g \mbepsilon] = h(\mbz,\mbdelta) - \E_{\mbz,\mbdelta}[h(\mbz,\mbdelta)].\]
Therefore for some constant $c$, $h^\prime(\hz) = h(\mbz,\mbdelta) + c$.
By the independence, $\hz\indep \mbepsilon$, we have
\[ q(\hz,h(\mbz,\mbdelta) \g \mbepsilon) =  q(\hz , h^\prime(\hz) - c \g \mbepsilon) = q(\hz, h^\prime(\hz) - c) = q(\hz,h(\mbz,\mbdelta)).\]
Thus we have $(\hz,h(\mbz,\mbdelta) )\indep \mbepsilon$.
See~\cref{lem:general-t-process-joint-ind} for the proof that $(\hz,h(\mbz,\mbdelta) )\indep \mbepsilon$ implies the joint independence $\mbeps \indep (\hz, \mbz, \mbdelta)$ for any treatment process $\mbt = g(\mbeps, h(\mbz, \mbdelta))$, including $\mbt=g(\mbeps) + h(\mbz, \mbdelta)$.

\def\eps{\mbepsilon}
\subsection{Joint independence treatments of the form $\mbt = g(\mbeps, h(\mbz, \mbdelta))$}\label{appsec:general-t-process}

General control functions for treatments of the form $\mbt = g(\mbeps, h(\mbz, \mbdelta))$, unlike $\mbt = g(\mbeps, \mbz)$, require a stronger joint independence $\mbeps \indep (\mbz, \hz, \mbdelta)$ to guarantee ignorability (A2, \cref{thm:intro}).
The structural assumptions --- that helped guarantee $\mbeps \indep (\mbz, \hz)$ above --- can guarantee $\mbeps \indep (h(\mbz, \mbdelta), \hz)$.
Here, we show that $\mbeps \indep (h(\mbz, \mbdelta), \hz) \implies \mbeps \indep (\mbz, \hz, \mbdelta)$  in such settings.

\begin{lemma}\label{lem:general-t-process-joint-ind}
Consider treatment process $\mbt = g(\mbeps, h(\mbz, \mbdelta))$ and the joint independence $(\hz,h(\mbz,\mbdelta) )\indep \mbepsilon$ holds.
Then, if $\hz = e(\mbt, \mbeps)$, the joint independence $(\hz,\mbz,\mbdelta)\indep \mbepsilon$ holds.
\end{lemma}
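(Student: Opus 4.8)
The plan is to reduce everything to the summarizing variable $\mbw := h(\mbz,\mbdelta)$. Then $\mbt = g(\mbeps,\mbw)$, and since $\hz = e(\mbt,\mbeps)$ the control function is the deterministic function $\hz = e\big(g(\mbeps,\mbw),\mbeps\big)$ of $(\mbeps,\mbw)$ alone — in particular, \emph{given $\mbw$, the control function is just a function of the \gls{iv}}. The whole argument then runs by conditioning on $\mbw$, using this observation together with the hypothesis $(\hz,\mbw)\indep\mbeps$, and reassembling by integrating $\mbw$ back out. Throughout I write $q$ for the joint law of all the variables in play and its marginals and conditionals.

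First I would record the reduced-form exogeneity. Because $\mbeps$ is an \gls{iv} (so $\mbeps\indep\mbz$) and $\mbdelta$ is independent noise, $\mbeps\indep(\mbz,\mbdelta)$; as $\mbw$ is a function of $(\mbz,\mbdelta)$ this upgrades to $\mbeps\indep(\mbz,\mbdelta,\mbw)$, hence $\mbeps\indep(\mbz,\mbdelta)\g\mbw$. Next I would observe that, conditionally on $\mbw$, the pair $(\hz,\mbeps)$ is a measurable function of $\mbeps$; combining this with $\mbeps\indep(\mbz,\mbdelta)\g\mbw$ and the elementary fact that $A\indep B\g C\implies A\indep\varphi(B,C)\g C$ for measurable $\varphi$, I get $(\mbz,\mbdelta)\indep(\hz,\mbeps)\g\mbw$, and in particular, by decomposition, $(\mbz,\mbdelta)\indep\hz\g\mbw$. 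Separately, the hypothesis $(\hz,h(\mbz,\mbdelta))\indep\mbeps$ is exactly $(\hz,\mbw)\indep\mbeps$, which is equivalent to $q(\hz,\mbeps\g\mbw)=q(\hz\g\mbw)\,q(\mbeps)$.

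Then I would recombine by disintegrating over $\mbw$:
\begin{align*}
q(\hz,\mbz,\mbdelta,\mbeps)
&=\int q(\mbz,\mbdelta\g\mbw)\,q(\hz,\mbeps\g\mbw)\,q(\mbw)\,d\mbw
 =\int q(\mbz,\mbdelta\g\mbw)\,q(\hz\g\mbw)\,q(\mbeps)\,q(\mbw)\,d\mbw\\
&=q(\mbeps)\int q(\hz,\mbz,\mbdelta\g\mbw)\,q(\mbw)\,d\mbw
 =q(\mbeps)\,q(\hz,\mbz,\mbdelta),
\end{align*}
where the first equality uses $(\mbz,\mbdelta)\indep(\hz,\mbeps)\g\mbw$, the second uses the hypothesis, and the third uses $(\mbz,\mbdelta)\indep\hz\g\mbw$. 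This is precisely $(\hz,\mbz,\mbdelta)\indep\mbeps$, which together with reconstruction yields ignorability via \Cref{thm:intro}.

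The only step that is not pure bookkeeping is the passage in paragraph two from ``$\hz$ is, given $\mbw$, a function of $\mbeps$'' to the conditional independence $(\mbz,\mbdelta)\indep(\hz,\mbeps)\g\mbw$, and the attendant measure-theoretic care in conditioning on a possibly continuous $\mbw$: this I would handle either by working with regular conditional distributions (disintegrations), or — as elsewhere in the paper — by assuming the relevant joints admit densities, so that the semi-graphoid manipulations and the change of variables apply verbatim. Everything else is routine.
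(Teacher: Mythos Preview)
Your proof is correct and takes a genuinely different route from the paper. The paper works with the conditional $q(\hz\g\mbz,\mbeps,\mbdelta)$, expands it as an integral over $\mbt$, and uses $\hz\indep(\mbz,\mbdelta)\g\mbt,\mbeps$ together with $\mbt=g(\mbeps,h(\mbz,\mbdelta))$ to reduce it to $q(\hz\g\mbeps,h(\mbz,\mbdelta))$; the hypothesis then kills the $\mbeps$-dependence, and a second integration over $\mbeps$ shows $q(\hz\g h(\mbz,\mbdelta))=q(\hz\g\mbz,\mbdelta)$, yielding $\hz\indep\mbeps\g(\mbz,\mbdelta)$ and hence (with $\mbeps\indep(\mbz,\mbdelta)$) the claim. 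You instead condition on the summary $\mbw=h(\mbz,\mbdelta)$ from the outset, observe that $\hz$ is a function of $\mbeps$ given $\mbw$, obtain $(\mbz,\mbdelta)\indep(\hz,\mbeps)\g\mbw$ by a single semi-graphoid step, and then disintegrate over $\mbw$. Your argument avoids the intermediate integral over $\mbt$ and is arguably more transparent about \emph{why} the result holds (everything factors through the sufficient summary $\mbw$); the paper's argument, on the other hand, extends verbatim to stochastic encoders $q(\hz\g\mbt,\mbeps)$, whereas your ``$\hz$ is a function of $\mbeps$ given $\mbw$'' step leans on the deterministic form $\hz=e(\mbt,\mbeps)$ stated in the lemma.
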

\begin{proof} We begin by showing $ q(\hz\g \mbz,\mbepsilon,\mbdelta) =  q(\hz\g h(\mbz,\mbdelta))$:
\begin{align}\label{eq:step1}
\begin{split}
    q(\hz\g \mbz,\mbepsilon,\mbdelta) & = \int q(\hz \g \mbz,\eps,\mbt=t,\mbdelta)q(\mbt=t \g \eps,\mbz,\mbdelta )dt \quad \{\text{full joint expansion}\}
    \\
                        & = \int q(\hz \g \eps,\mbt=t)q(\mbt=t \g \eps,\mbz,\mbdelta )dt \quad \{\hz\indep (\mbz,\mbdelta) \g \eps, \mbt=t\}
                        \\
                        & = \int q(\hz \g \eps,\mbt=t)q(\mbt=t \g \eps,h(\mbz,\mbdelta) )dt \quad \{\mbt = g(\mbeps, h(\mbz, \mbdelta))\}
                        \\
                        & = \int q(\hz \g \eps,\mbt=t,h(\mbz,\mbdelta))q(\mbt=t \g \eps,h(\mbz,\mbdelta) )dt \quad \{\hz\indep h(\mbz,\mbdelta) \g \eps, \mbt=t\}
                        \\
                        & = q(\hz \g \mbepsilon, h(\mbz,\mbdelta))
                        \\
                        & = q(\hz \g h(\mbz,\mbdelta)) \quad  \{(\hz,h(\mbz,\mbdelta)) \indep \eps\}
    \end{split}
\end{align}
Integrating both sides with respect to $q(\mbepsilon \g \mbz,\mbdelta) $ we get
\begin{align}\label{eq:step2}
  \begin{split}
    \int q(\hz \g h(\mbz,\mbdelta)) q(\eps=\epsilon\g \mbz,\mbdelta)d\epsilon 
    & = \int q(\hz\g \mbz,\mbepsilon=\epsilon,\mbdelta) q(\eps=\epsilon \g \mbz,\mbdelta) d\epsilon =  q(\hz\g \mbz,\mbdelta)
  \end{split}
\end{align}
Now, the LHS in~\cref{eq:step2} is
\[\int q(\hz \g h(\mbz,\mbdelta)) q(\eps=\epsilon\g \mbz,\mbdelta)d\epsilon  = q(\hz \g h(\mbz,\mbdelta))\implies q(\hz \g h(\mbz,\mbdelta)) = q(\hz\g \mbz,\mbdelta).\]
This means
\[q(\hz\g \mbz,\mbepsilon,\mbdelta) = q(\hz \g h(\mbz,\mbdelta)) = q(\hz\g \mbz,\mbdelta)\]
Thus $ (\hz, h(\mbz,\mbdelta))\indep \mbepsilon$ implies the joint independence $(\hz, \mbz,\mbdelta)\indep \mbepsilon$.
\end{proof}

\paragraph{Note.} The proof above shows that we can recover a control function that satisfies ignorability.
In this additive setting with finite support however, both the control function and the true confounder violate another fundamental assumption in causal estimation: \emph{positivity}.
To see this violation of positivity notice that $p(\mbt > a+ \max_{\epsilon\in \supp(\mbeps)} g(\epsilon) \g h(\mbz, \mbdelta)= a) = 0$ for any $a$ such that $p(\mbt > a + \max_{\epsilon\in \supp(\mbeps)} g(\epsilon) ) > 0$ and $p(h(\mbz, \mbdelta)=a)>0$.
When positivity is violated, further assumptions are needed to compute causal effects on the whole support of $\mbt$ in general.
Without further assumptions, effects can only be computed on a compact subset of $B \subseteq \supp(\mbt)$ within which positivity holds.

\subsection{From monotonic treatment processes to joint independence}\label{appsubsec:monotonicity}
\citet{imbens2009identification} explored identification for settings where the outcome process is non-separable but the treatment is a strictly monotonic function of the unobserved confounder.
We show that if the reconstruction map $d(\hz, \mbeps)$ reflects this monotonicity condition and $\hz \indep \mbeps$, the control function is determined by the true confounder and therefore joint independence holds.
In \gls{vde}, the decoder would be monotonic to reflect this assumption.
\begin{lemma}
  Let $\mbepsilon$ and $\mbz$ be the true \gls{iv} and confounder respectively. Let $\mbz$ be a continuous scalar.
  \vspace{-5pt}
  \begin{enumerate}[leftmargin=10pt,itemsep=-3pt]
    \item Assume that $\mbz$ has a continuous strictly monotonic \gls{cdf}.
    Let the true treatment process be $\mbt=g(\mbepsilon, \mbz)$ where $g$ is strictly monotonic in the second argument.
    \item Let the control function be $\hz = e(\mbepsilon, \mbt)$ and let $\hz \indep \eps$. Let reconstruction map be $d$ where $\mbt = d(\mbepsilon, \hz)$. Let $e(\cdot, \cdot)$ and $d(\cdot, \cdot)$ be strictly monotonic in the second argument\footnote{Note that $e(\epsilon, \cdot) = d^{-1}(\epsilon, \cdot)$. Then, monotonicity of $d$ in the second argument implies the same for $e$.}.
    \item  Assume that the functions $g,e,d$ are continuous in the second argument and exist for almost every value in the first argument.
  \end{enumerate}
  Then, the control function $\hz$ can be expressed as a deterministic function of the true confounder $\mbz$.
\end{lemma}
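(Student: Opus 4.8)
The plan is to reduce everything to the probability integral transform. The crucial facts are the two independences available: $\mbz \indep \mbepsilon$ (the defining property of an \gls{iv}) and $\hz \indep \mbepsilon$ (a hypothesis). First I would fix a value $\mbepsilon = \epsilon$ in the full-measure set on which $g(\epsilon,\cdot)$, $d(\epsilon,\cdot)$ and $e(\epsilon,\cdot)$ are defined, continuous, and strictly monotone, and compute the conditional \gls{cdf} $F_{\mbt\g\mbepsilon=\epsilon}$ in two ways. From $\mbt = g(\epsilon,\mbz)$, strict monotonicity and $\mbz \indep \mbepsilon$ give $\{\mbt \le t\} = \{\mbz \le g^{-1}(\epsilon,t)\}$ (or $\{\mbz \ge g^{-1}(\epsilon,t)\}$), so, using continuity of $F_{\mbz}$ to drop boundary terms, $F_{\mbt\g\mbepsilon=\epsilon}(g(\epsilon,z)) = F_{\mbz}(z)$ if $g(\epsilon,\cdot)$ increases and $1 - F_{\mbz}(z)$ if it decreases. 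From $\mbt = d(\epsilon,\hz)$ together with $\hz \indep \mbepsilon$, the identical computation yields $F_{\mbt\g\mbepsilon=\epsilon}(d(\epsilon,\zhat)) = F_{\hz}(\zhat)$ up to the same flip.

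Next I would match the two expressions along a joint sample. For almost every $(\epsilon,t)$ in the support there is a unique preimage $\mbz = g^{-1}(\epsilon,t)$ and a unique $\hz = d^{-1}(\epsilon,t) = e(\epsilon,t)$; plugging $t = g(\epsilon,\mbz) = d(\epsilon,\hz)$ into the two displays above shows $F_{\hz}(\hz) = \psi\!\left(F_{\mbz}(\mbz)\right)$ almost surely, where $\psi$ is either the identity or $u\mapsto 1-u$, determined by the monotonicity directions of $g$ and $d$. In particular $F_{\hz}(\hz)$ and $F_{\mbz}(\mbz)$ have the same distribution.

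The last step is inversion. Because $F_{\mbz}$ is continuous and strictly increasing, $F_{\mbz}(\mbz)$ is uniform on $(0,1)$, hence so is $F_{\hz}(\hz)$; this forces $F_{\hz}$ to be atomless, and for an atomless \gls{cdf} one has $Q_{\hz}\!\left(F_{\hz}(\hz)\right) = \hz$ almost surely, where $Q_{\hz}$ is the generalized (left-continuous) inverse. Combining, $\hz = Q_{\hz}\!\left(\psi(F_{\mbz}(\mbz))\right)$ almost surely, which exhibits $\hz$ as a deterministic --- indeed monotone --- function of $\mbz$, as required; the joint independence $(\hz,\mbz)\indep\mbepsilon$ then follows from $\mbz\indep\mbepsilon$.

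The conceptual content is light; the work is in the bookkeeping. The main things to be careful about are keeping the ``almost every $\epsilon$'' and ``almost surely'' qualifiers aligned given that $g,d,e$ are only assumed to exist a.e.\ in the first argument; arguing that the monotonicity direction can be taken independent of $\epsilon$ (the natural reading of the hypothesis) so that a single $\psi$ works for all samples rather than an $\epsilon$-dependent $\psi_\epsilon$; and handling the fact that $F_{\hz}$ need not be strictly increasing on its support, so only the generalized inverse $Q_{\hz}$ is available rather than a genuine inverse. I expect the direction-consistency point to be the most delicate: a monotonicity direction that flips with $\epsilon$ would let $\hz$ genuinely depend on $\epsilon$ (e.g.\ take $\hz = F_{\mbt\g\mbepsilon=\epsilon}(\mbt)$), so the hypothesis must be interpreted as a fixed global direction for the conclusion to hold.
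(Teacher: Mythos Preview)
Your proposal is correct and follows essentially the same route as the paper: both reduce to the probability integral transform, showing that $F_{\hz}(\hz)$ equals $F_{\mbz}(\mbz)$ (up to the flip $u\mapsto 1-u$) and then inverting. The paper phrases this via auxiliary uniforms $\mbu=H(\mbz)$ and $\mbv=\hat H(\hz)$ and shows the composite map $J(\epsilon,\cdot)$ is the identity, while you compute $F_{\mbt\mid\mbepsilon=\epsilon}$ two ways and match; the content is the same. Your treatment is in fact more careful than the paper's on two points the paper glosses over: the direction-of-monotonicity bookkeeping (your $\psi$) and the use of the generalized inverse $Q_{\hz}$ in place of an assumed strict inverse $\hat H^{-1}$.
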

\begin{proof}
  \newcommand{\Hhat}{\hat{H}}
First, note that $\mbt$ can be written as a function of $\eps$ and a uniform random variable $\mbu$ using the \gls{cdf}-inverse trick. Let $H(z) = F(\mbz \leq z)$.
By strict monotonicity and continuity of $H$, $H^{-1}$ exists and $\mbz = H^{-1}(\mbu)$ for a uniform random variable $\mbu\indep \mbepsilon$: 
\[\mbt = g(\mbepsilon, \mbz) = g(\eps, H^{-1}(\mbu)) = \ghat(\eps, \mbu).\]
Note that $H^{-1}$ is strictly monotonic.
So, $\ghat$ is a strictly monotonic function in the second argument.

Second, due to $\mbzhat\indep \eps$, the conditional \gls{cdf} of $\mbzhat \g \mbeps=\epsilon$ is the same as the marginal \gls{cdf} as $\mbzhat$ for almost every value $\epsilon\in \supp(\mbeps)$; let's call this \gls{cdf} $\Hhat$.
By the definition $\hz = e(\mbeps, \mbt)$ we can express $\mbzhat = e(\eps, \ghat(\eps, \mbu))$.
Now, $e(\cdot, \cdot), \ghat(\cdot, \cdot)$ are both continuous and strictly monotonic in the second argument.
So, $\mbzhat$'s \gls{cdf} $\Hhat$ is also strictly monotonic and $\Hhat^{-1}$ exists and is again strictly monotonic.
Therefore, for almost any $\epsilon \in \supp(\mbeps)$, we can construct a new uniform random variable by applying $\mbzhat$'s \gls{cdf} $\Hhat$ to $\mbzhat$:
\[\mbv = \Hhat(\mbzhat) = \Hhat(e(\eps, \ghat(\eps, \mbu))).\]
For simplicity, let $\mbv = J(\mbepsilon, \mbu)$. Note $J(\cdot, u)$ is strictly monotonic in $u$ by strict monotonicity of $\Hhat, \ghat$ in their second arguments.
So, we can write $\mbu$'s \gls{cdf} in terms of $\mbv$'s \gls{cdf}:
\[a = P(\mbu < a) = P(\mbv < J(\epsilon, a)) = J(\epsilon, a).\]
This means that $J(\epsilon,a)$ is an identity function for almost any $\epsilon\in \supp(\mbeps)$.

Finally, we can write $\mbzhat$ as a function of $\mbz$ for almost any $\epsilon\in \supp(\mbeps)$, completing the proof:
\begin{align*}
  \mbzhat & = \Hhat^{-1}(J(\epsilon, H(\mbz))) =  \Hhat^{-1}(H(\mbz))
\end{align*}
\end{proof}

\subsection{Comparion against other identification results}\label{appsec:identification}
\citet{imbens2009identification} consider non-separable outcome processes, i.e.\ $\mby = f(\mbt, \mbz)$ and construct control functions by assuming that 1) treatment is a strictly monotonic function of the confounder, 3) the confounder is continuous with a strictly monotonic \acrshort{cdf}, and 2) positivity holds for $\mbt$ with respect to $\mbz$.
These assumptions also lead to identification with general control functions due to the following:
a) the positivity assumption is equivalent to the strong~\gls{iv} assumption and b) like additivity, the strict monotonicity assumption reflected in the reconstruction map $d(\hz, \mbeps)$ as a function of $\hz$ helps guarantee joint independence; see~\cref{appsubsec:monotonicity}.

\Gls{2sls} requires the outcome process to be additive, $\mby = f(\mbt) + \mbz$.
Further, \gls{2sls} needs a ``completeness'' property: the causal effect function and \gls{iv} are correlated~\citep{guo2016control}.
While joint independence may not be guaranteed by the completeness condition, it can be guaranteed in certain settings that violate completeness.
An example is multiplicative treatment $\mbt = \mbz*\mbeps$ with $\mbz\sim \cN(0,1)$ and a linear outcome; \gls{2sls} fails because $\E[\mbt\mbeps]=0$.
When joint independence can be guaranteed and the \gls{iv} is strong, identification with general control functions does not require structural restrictions like additivity of the outcome process that both~\gls{2sls} and \gls{cfn} rely on.

\subsection{Estimation error bounds}\label{appsec:error-bound}
We give an example of how violations in reconstruction and independence affect errors in effects.
\setcounter{thm}{1}
\subsubsection{\gls{gcfn}'s estimation error in additive treatment process}\label{appsec:add-t-error-bound}
\addprocthm{}
\begin{proof}
  
Recall the true data distribution is $F(\mbt, \mbz, \mbeps)$ such that $\mbz \indep \mbeps$ and the implied joint $q(\hz, \mbt, \mbz, \mbeps) = q(\hz\g \mbt, \mbeps)F(\mbt, \mbz, \mbeps)$.
  For any $L$-Lipschitz function $\ell(\epsilon)$:
  \begin{align}
    \begin{split}
        |\E_{q(\mbeps, \hz)}\ell(\mbeps) \hz| &= |\E_{q(\hz)}\left(\hz\E_{q(\mbeps\g \hz)}\ell(\mbeps)\right)  - \left(\E_{q(\hz)F(\mbeps)}\hz\ell(\mbeps)\right) | \quad \{\E_{q(\hz)}\hz = 0\}
        \\
          &= \left|\E_{q(\hz)}\left(\hz\left(\E_{q(\mbeps\g \hz)}\ell(\mbeps)  - \E_{F(\mbeps)}\ell(\mbeps)\right)\right)\right| 
        \\
          &\leq \E_{q(\hz)}|\hz| \left|\E_{q(\mbeps\g \hz)}\ell(\mbeps)  - \E_{F(\mbeps)}\ell(\mbeps)\right| 
        \\
        & \leq L \E_{q(\hz)}|\hz|\wass{q(\mbeps\g\hz)}{F(\mbeps)}
        \\
        & \leq \gamma L \E_{q(\hz)}|\hz|.
    \end{split}
  \end{align}
  
  Using the definition of the additive treatment process and the reconstruction error bound,
  $\E_{q(\mbz, \hz, \mbeps)}(\mbz+g(\mbeps) - \hz- g^\prime(\mbeps))^2 = \E_{q(\mbt, \hz, \mbeps)}(\mbt - \hz - g^\prime(\mbeps))^2 \leq  \delta$.
  Now, we can bound error in $\hz$ approximating $\mbz$
  \begin{align*}
   \delta &\geq \E_{q(\mbz, \hz, \mbeps)}(\mbz - \hz  + g(\mbeps) - g^\prime(\mbeps))^2
   \\
        &  = \E_{q(\mbz, \hz)}(\mbz - \hz )^2 +  \E_{F(\mbeps)}( g(\mbeps) -g^\prime(\mbeps))^2 +  2\E_{q(\mbz, \hz, \mbeps)}(\mbz-\hz)( g(\mbeps) - g^\prime(\mbeps))
   \\
        &  \geq \E_{q(\mbz, \hz)}(\mbz - \hz )^2 +  2\E_{q(\mbz, \hz, \mbeps)}(\mbz-\hz)( g(\mbeps) - g^\prime(\mbeps))
   \\
        &  = \E_{q(\mbz, \hz)}(\mbz - \hz )^2 +  2\E_{F(\mbz)F(\mbeps)}\mbz( g(\mbeps) - g^\prime(\mbeps)) -  2\E_{q( \hz, \mbeps)}\hz( g(\mbeps) - g^\prime(\mbeps)) \quad\quad \{\mbz\indep \mbeps\}
   \\
        &  = \E_{q(\mbz, \hz)}(\mbz - \hz )^2 + 0 -  2\E_{q(\hz, \mbeps)}\hz( g(\mbeps) - g^\prime(\mbeps)) \quad\quad \{\E_{F(\mbz)}\mbz=0\}
   \\
        &  \geq \E_{q(\mbz, \hz)}(\mbz - \hz )^2  -  4\gamma L_g \E_{q(\hz)}|\hz| \quad \{g(\mbeps) - g^\prime(\mbeps) \text{ is $2L_g$-Lipschitz}\}
  \end{align*} 
  Thus,
  $\E_{q(\mbz, \hz)}(\mbz - \hz )^2 \leq \delta + 4\gamma L_g \E_{q(\hz)}|\hz|.$
  We bound the absolute error in causal effect due to using $\hz$ instead of $\mbz$
  \begin{align}
    \begin{split}
      \E_{\mbt}|\tauhat(\mbt) - \tau(\mbt)|  & = \E_{\mbt} |\E_{q(\hz)}f(\mbt,\hz) - \E_{F(\mbz)} f(\mbt, \mbz)|
      \\
                & =\E_{\mbt} |\E_{q(\hz, \mbz)}\left(f(\mbt,\hz) - f(\mbt, \mbz)\right)| 
      \\
                  & \leq \E_{\mbt} \E_{q(\hz, \mbz)} \left|f(\mbt,\hz) - f(\mbt, \mbz)\right|
      \\
                & \leq \E_{\mbt} L \E_{q(\hz, \mbz)}|\hz - \mbz|
      \\
                & \leq L \E_{\mbt} \sqrt{\E_{q(\hz, \mbz)}\left(\hz - \mbz\right)^2} \quad (\text{Cauchy-Schwarz})
      \\
                & \leq L \sqrt{\delta + 4\gamma L_g \E_{q(\hz)}|\hz|}
    \end{split}
  \end{align}
  When sample size goes to $\infty$, we can guarantee that reconstruction becomes perfect, meaning that $\delta\rightarrow 0 $, and that $\hz \indep \mbeps$ holds, meaning that $\gamma \rightarrow 0$.
  Then, this error bound on effects becomes $0$.

\end{proof}

\subsubsection{Bounding effect estimation error}\label{appsec:gen-err-bound}
 Here, we show that if positivity holds for $\mbt$ w.r.t.\ $\mbz$, and $\mbt$ w.r.t.\ $\hz$, the residual confounding given $\hz$, i.e.\ $\MI(\mbz;\mbt\g \hz)$, controls the expected absolute error in effects if $q(\mbz \g \hz)$ is sufficiently concentrated.
\newcommand\boundthm{
Let $F(\mby, \mbt,\mbz,\mbepsilon)$ be the true data distribution.
Let $q(\mby, \mbt, \hz) = \int F(\mby, \mbt, \mbz=z, \epsilon) q(\hz \g \mbt, \epsilon) d z d \epsilon$.
With $\tau(t^*)$ and $\tauhat(t^*)$ as the true and estimated causal effect of $do(\mbt=t^*)$ respectively, let $\omega(t^*) = |\tauhat(t^*) - \tau(t^*)|$ be the error.
We assume the following.
 \begin{enumerate}[topsep=0pt,leftmargin=*,itemsep=-1mm,partopsep=0ex,parsep=1ex]
   \item Assume that $\mbt$ satisfies positivity with respect to $\mbz$, and $\mbt$ satisfies positivity with respect to $\hz$.
   \item Let $\E[\mby \g \mbt=t, \mbz=z] $ where $\E$ is w.r.t.\ $F$, be an $L_t$-Lipschitz function of $z$, for any $t$.
   \item Let $ L := \sup_t L_t $.
   Let $W := \sup_{t, \zhat} \nicefrac{F(\mbt=t)}{q(\mbt=t\g \hz=\zhat)}$.
   \item 
   Assume $q(\mbz \g \hz)$ satisfies the transportation inequality $T_1(\sigma^2/2)$~\citep{niles2019estimation}.
 \end{enumerate}
 Then, the expected absolute error in effects is bounded as:
  $\quad \E_{F(\mbt)} \omega(\mbt)  \leq \sigma L\sqrt{W \mbI(\mbz;\mbt \g \hz)}.$
}
\begin{thm}\label{thm:bound-thm}
\boundthm{}
\end{thm}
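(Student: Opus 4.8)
The plan is to compare the regression-adjustment estimand $\tauhat(t^*)=\E_{q(\hz)}\E_q[\mby\mid\mbt=t^*,\hz]$ against the backdoor formula $\tau(t^*)=\E_{F(\mbz)}f(t^*,\mbz)$, where $f(t,z):=\E_F[\mby\mid\mbt=t,\mbz=z]$ is the $L_t$-Lipschitz function from the hypotheses, and to control their discrepancy by the residual confounding $\MI(\mbz;\mbt\mid\hz)$. The first observation is that $q$ leaves the joint law of $(\mby,\mbt,\mbz)$ intact (only $\hz$ is appended, as fresh randomness given $(\mbt,\mbepsilon)$), so $\E_q[\mby\mid\mbt,\mbz]=f(\mbt,\mbz)$ and $q(\mbz)=F(\mbz)$; moreover, since the outcome is a function of $(\mbt,\mbz)$ and exogenous noise independent of $\mbepsilon$, we get $\mby\indep\hz\mid\mbt,\mbz$. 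Hence $\E_q[\mby\mid\mbt=t^*,\hz=\zhat]=\E_{q(\mbz\mid\mbt=t^*,\hz=\zhat)}f(t^*,\mbz)$, while inserting $q(\mbz)=\int q(\hz)q(\mbz\mid\hz)$ gives $\tau(t^*)=\E_{q(\hz)}\E_{q(\mbz\mid\hz)}f(t^*,\mbz)$. Subtracting, $\omega(t^*)\le\E_{q(\hz)}\bigl|\E_{q(\mbz\mid\mbt=t^*,\hz)}f(t^*,\mbz)-\E_{q(\mbz\mid\hz)}f(t^*,\mbz)\bigr|$. The two positivity assumptions are precisely what guarantees these conditionals and both adjustment formulas are well defined; note that when $\MI(\mbz;\mbt\mid\hz)=0$ one has $q(\mbz\mid\mbt=t^*,\hz)=q(\mbz\mid\hz)$ and the bound collapses to $0$, recovering \cref{thm:intro}.

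Second, I would bound the inner difference for each fixed $\zhat$. Since $f(t^*,\cdot)$ is $L_{t^*}$-Lipschitz, Kantorovich--Rubinstein duality gives $\bigl|\E_{q(\mbz\mid\mbt=t^*,\hz=\zhat)}f(t^*,\mbz)-\E_{q(\mbz\mid\hz=\zhat)}f(t^*,\mbz)\bigr|\le L_{t^*}\wass{q(\mbz\mid\mbt=t^*,\hz=\zhat)}{q(\mbz\mid\hz=\zhat)}$, and the transportation inequality $T_1(\sigma^2/2)$ of $q(\mbz\mid\hz=\zhat)$ upgrades this Wasserstein distance to $\sigma\sqrt{\KL{q(\mbz\mid\mbt=t^*,\hz=\zhat)}{q(\mbz\mid\hz=\zhat)}}$. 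Bounding $L_{t^*}\le L$ and integrating over $q(\hz)$ yields $\omega(t^*)\le\sigma L\,\E_{q(\hz)}\sqrt{\KL{q(\mbz\mid\mbt=t^*,\hz)}{q(\mbz\mid\hz)}}$.

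Third, I would average over $\mbt\sim F$. Using $q(\mbt)=F(\mbt)$, the bound becomes $\sigma L\iint F(t)\,q(\zhat)\,h(t,\zhat)\,dt\,d\zhat$ with $h(t,\zhat):=\sqrt{\KL{q(\mbz\mid\mbt=t,\hz=\zhat)}{q(\mbz\mid\hz=\zhat)}}$. The crucial move is to write $F(t)q(\zhat)=q(\mbt=t\mid\hz=\zhat)q(\zhat)\cdot\frac{F(t)}{q(\mbt=t\mid\hz=\zhat)}$ and apply Cauchy--Schwarz against the probability measure $q(\mbt\mid\hz)q(\hz)$: the weight factor is $\bigl(\E_{q(\hz)}\!\int F(t)\,\frac{F(t)}{q(\mbt=t\mid\hz)}\,dt\bigr)^{1/2}\le\sqrt{W}$, since $F(t)/q(\mbt=t\mid\hz=\zhat)\le W$ and $\int F(t)\,dt=1$, while the remaining factor is $\bigl(\E_{q(\hz)}\E_{q(\mbt\mid\hz)}\KL{q(\mbz\mid\mbt,\hz)}{q(\mbz\mid\hz)}\bigr)^{1/2}=\sqrt{\MI(\mbz;\mbt\mid\hz)}$ by the standard identity for conditional mutual information. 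Combining gives $\E_{F(\mbt)}\omega(\mbt)\le\sigma L\sqrt{W\,\MI(\mbz;\mbt\mid\hz)}$.

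The main obstacle is this last step: a naive bound $F(t)/q(\mbt\mid\hz)\le W$ inside the integral followed by Jensen would only produce the looser $W\sqrt{\MI(\mbz;\mbt\mid\hz)}$; obtaining $\sqrt{W}$ requires splitting the density ratio symmetrically across Cauchy--Schwarz, which simultaneously converts $\E\sqrt{\cdot}$ into $\sqrt{\E\cdot}$ and peels off the $\sqrt{W}$ factor. A secondary point demanding care is bookkeeping which marginals $q$ preserves --- the law of $(\mby,\mbt,\mbz)$, hence of $\mbt$ and of $\mbz$, but not the joint with $\hz$ --- so that the two representations of $\tau(t^*)$ line up and cancel against $\tauhat(t^*)$; the remaining ingredients (Kantorovich--Rubinstein duality and the definition of $T_1$) are routine.
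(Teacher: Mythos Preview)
Your proposal is correct and follows the same architecture as the paper's proof: expand $\E_q[\mby\mid\mbt,\hz]$ through $q(\mbz\mid\mbt,\hz)$ using $\mby\indep\hz\mid\mbt,\mbz$, rewrite $\tau$ via $F(\mbz)=\E_{q(\hz)}q(\mbz\mid\hz)$, apply Kantorovich--Rubinstein and then the $T_1(\sigma^2/2)$ transportation inequality to pass to $\kld$, and aggregate to conditional mutual information. The one place you diverge is the final step, which you flag as ``the main obstacle'': the paper simply applies Jensen/Cauchy--Schwarz twice (first over $q(\hz)$, then over $F(\mbt)$) to move the square root outside \emph{before} performing the change of measure $\E_{F(\mbt)}[\cdot]=\E_{q(\mbt\mid\hz)}\bigl[\tfrac{F(\mbt)}{q(\mbt\mid\hz)}\,\cdot\bigr]$, so the ratio $F(\mbt)/q(\mbt\mid\hz)\le W$ sits inside the square root and yields $\sqrt{W}$ directly --- no symmetric splitting across Cauchy--Schwarz is needed. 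Your route is also valid and gives the same bound, but the paper's ordering is slightly cleaner.
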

\begin{proof} (of \cref{thm:bound-thm})
  Positivity of $\mbt$ w.r.t.\ $\mbz$ implies the conditional expectation $\E[\mby\g \mbz=z,\mbt=t^*]$ exists for all $z\in \supp(F(\mbz)), t^*\in\supp(\mbt)$.
  Positivity of $\mbt$ w.r.t.\ $\hz$ implies the conditional expectation $\E[\mby\g \hz=\zhat,\mbt=t^*]$ exists for all $\zhat\in \supp(F(\hz)), t^*\in\supp(\mbt)$.
    We begin by expanding the expectation $\E[\mby\g \hz=\zhat,\mbt=t^*]$ as an integral over the conditional $F(\mby\g\mbz,\mbt, \hz) q(\mbz \g \mbt, \hz)$.
  \begin{align*}
    \begin{split}
      \E[y\g \hz=\zhat,\mbt=t^*] & = \int \E\left[\mby\g \mbz=z,\mbt=t^*,\hz=\zhat\right] q(\mbz =z\g \mbt=t^*,\hz=\zhat) dz
      \\
      & = \int \E\left[\mby\g \mbz=z,\mbt=t^*\right] q(\mbz =z\g \mbt=t^*,\hz=\zhat) d z \quad \{\text{by } \mby \indep \hz\g \mbt, \mbz \},
    \end{split}
  \end{align*}
  where the inner expectation is with respect to the conditional distribution $F(\mby\g \mbt, \mbz)$.
  Now, we prove the bound on $\omega(t^*)$ by expanding the true and estimated effects as expectations over $\mbz$:
  \begin{align*}
    \omega(t^*) &= |\tau(t^*) - \tauhat(t^*)|
    \\
    & = \left|\int\left[F(\mbz = z) - \E_{q(\hz)}q(\mbz =z\g \mbt=t^*,\hz)  \right]\E\left[\mby\g \mbz=z,\mbt=t^*\right] d z\,\right|
    \\
    & =  L_{t^*}\left|\int\left[F(\mbz = z) - \E_{q(\hz)}q(\mbz =z\g \mbt=t^*,\hz)\right] \frac{\E\left[\mby\g \mbz=z,\mbt=t^*\right]}{L_t^*}d z\,\right|
    \\
    & =  L_{t^*}\left|\int\left[\E_{q(\hz)}\left(q(\mbz = z\mid \hz)  - q(\mbz =z\g \mbt=t^*,\hz)\right)\right] \frac{\E\left[\mby\g \mbz=z,\mbt=t^*\right]}{L_t^*}d z\,\right|
    \\
    & \leq L_{t^*}\E_{q(\hz)}\left|\int\left[\left(q(\mbz = z\mid \hz) - q(\mbz =z\g \mbt=t^*,\hz)\right)\right] \frac{\E\left[\mby\g \mbz=z,\mbt=t^*\right]}{L_t^*}d z\,\right|
    \\
    & \leq L_{t^* } \E_{q(\hz)}\wass{q(\mbz\g \mbt=t^*,\hz)}{q(\mbz\g \hz)}
    \\ 
    & \quad\quad\left\{\nicefrac{\E\left[\mby\g \mbz=z,\mbt=t^*\right]}{L_{t^*}}\text{ is 1-Lipschitz}\right\}
    \\
    & \leq L_{t^* } \E_{q(\hz)} \sigma\sqrt{\KL{q(\mbz\g \mbt=t^*,\hz)}{q(\mbz\g \hz)}} \\ 
    & \leq L_{t^* } \sigma\sqrt{ \E_{q(\hz)} \KL{q(\mbz\g \mbt=t^*,\hz)}{q(\mbz\g \hz)}}  \quad \{\text{by Cauchy Schwarz}\},
  \end{align*}
  where the $\mathcal{W}_1$ term was bounded by $\kld$ by the assumption that $q(\mbz \g \hz)$ satisfies the transportation inequality $T_1(\sigma^2/2)$~\citep{niles2019estimation}.
  Using $L = \sup_{t} L _t$ and $W = \nicefrac{F(\mbt=t)}{q(\mbt=t\g \hz=\zhat)} $, we can bound the average absolute error
  \begin{align*}
      \E_{F(\mbt)}\omega(\mbt) &\leq  \sigma \E_{F(\mbt)} L_{\mbt } \sqrt{\E_{q(\hz)} \KL{q(\mbz\g \mbt,\hz)}{q(\mbz\g \hz)}}
      \\
            & \leq \sigma L \sqrt{\E_{q(\hz)} \E_{F(\mbt)}  \KL{q(\mbz\g \mbt,\hz)}{q(\mbz\g \hz)} } \quad \{\text{by Cauchy Schwarz}\}
            \\
            & = \sigma L \sqrt{\E_{q(\hz)}  \E_{q(\mbt\g \hz)}  \frac{F(\mbt)}{q(\mbt\mid \hz)}  \KL{q(\mbz\g \mbt,\hz)}{q(\mbz\g \hz)}}
            \\
            & \leq \sigma L\sqrt{W}\sqrt{\E_{q(\hz)}  \E_{q(\mbt\g \hz)} \KL{q(\mbz\g \mbt,\hz)}{q(\mbz\g \hz)}}
            \\
            & = \sigma L\sqrt{W}\sqrt{\mbI(\mbt ;\mbz \g \hz)}
  \end{align*}
\end{proof}

\subsection{Estimation with the Two-stage least-squares method}\label{app:subsec-2sls}
We first describe the general version of \glsreset{2sls}\gls{2sls}.
Let the outcome, treatment and \gls{iv} be $\mby,\mbt',\mbepsilon$ respectively and the true data distribution be $p(\mbt',\mby,\mbepsilon)$.
\begin{enumerate}[leftmargin=*]
    \item In the first-stage, \gls{2sls} learns the distribution $q(\mbt\g \mbepsilon)$.
    Given some class of distributions $Q$, the first-stage can be framed as a maximum-likelihood problem:
    \[q = \argmax_{q'\in Q}\E_{p(\mbt',\mbepsilon)}\log q'(\mbt'\g \mbepsilon)\]
    In our setup, $\mbt$ is the \textit{synthetic treatment} sampled from the conditional distribution $q$ estimated in the first stage.
    \item In the second-stage, \gls{2sls} learns the conditional distribution of the outcome $\mby$ given the \textit{synthetic treatment} $\mbt$ sampled from the conditional $q(\mbt\g \mbepsilon)$ from the first stage.
    Given some class of distributions $G$, \gls{2sls}'s second-stage can be framed as a maximum-likelihood problem:
    \[g = \argmax_{g'\in G} \E_{p(\mby,\mbepsilon)}\E_{q(\mbt\g \mbepsilon)} \log g'(\mby \g \mbt).\]
    The causal effect estimate is then computed as:
    $ f^*(t) = \E_{g(\mby\g \mbt=t)}[\mby].$
\end{enumerate}

Typically in settings with continuous $\mby,\mbt$, both stages of \gls{2sls} are framed and implemented as least-squares regressions instead of maximum-likelihood problems. See \citet{kelejian1971two} for an overview of classical vs. Bayesian \acrlongpl{2sls}.

In this section, we derive an alternate expression for \gls{2sls}'s causal effect estimate $f^*(t)$.
Recall that $\mbt$ is the \textit{synthetic treatment} sampled from the conditional distribution $q$ estimated in the first stage.
We assume that both stages of \gls{2sls} are perfectly solved. Note that $\mbt$ is independently sampled conditioned on $\mbepsilon$. This imposes the following conditional independencies:
\[\mby\indep \mbt\g \mbepsilon,\mbt'\quad and \quad \mbt'\indep \mbt\g \mbepsilon.\]
We marginalize out $\mbt',\mbepsilon$ from the joint $q(\mby,\mbt,\mbt',\mbepsilon)$ to get the dependence of $\mby$ on $\mbt$:
\begin{align}
\begin{split}
    f^*(t)  &= \E[\mby=y\g \mbt=t] \\
    & = \int_{t',\epsilon}yq(\mby=y, \mbt'=t',\mbepsilon=\epsilon\g \mbt=t) d \epsilon dy dt' \\
    & = \int_{t',\epsilon}yp(\mby=y\g \mbt'=t',\mbepsilon=\epsilon, \mbt=t)q(\mbepsilon=\epsilon\g \mbt=t)p(\mbt=t'\g \mbepsilon=\epsilon, \mbt=t) d \epsilon
    dy dt' \\
    & = \int_{t',\epsilon}yp(\mby=y\g \mbt'=t',\mbepsilon=\epsilon)q(\mbepsilon=\epsilon\g \mbt=t)p(\mbt'=t'\g \mbepsilon=\epsilon) d \epsilon dy dt'  \quad  \\
    & \quad \{\text{by } \mbt'\indep \mbt\g \mbepsilon,\, \mbt\indep \mby\g \mbt, \mbepsilon
                \},
\end{split}
\end{align}
which yields
\begin{align}
\begin{split}
f^*(t) & = \int yp(\mby=y\g \mbt'=t',\mbepsilon=\epsilon)q(\mbepsilon=\epsilon\g \mbt=t)p(\mbt'=t'\g \mbepsilon=\epsilon) d \epsilon dy dt' 
\\
& = \E_{q(\mbepsilon\g \mbt=t)}\E_{p(\mbt'\g \mbepsilon)}\E[\mby\g \mbt',\mbepsilon].
\end{split}
\end{align}
This shows that the effect estimated by \gls{2sls} can be rewritten as
\[f^*(t) = \E[\mby\g \mbt=t] =  \E_{q(\mbepsilon\g \mbt=t)}\E_{p(\mbt'\g \mbepsilon)}\E[\mby\g \mbt',\mbepsilon] \]

With this, we show that \gls{2sls}'s estimation is biased when the outcome process might have multiplicative interactions between treatment and confounders.
Consider this data generation:
\[\mbepsilon,\mbz\sim \cN(0,1),\,\,\mbt = \mbepsilon + \mbz,\,\,\mby = \mbt + \mbt^2\mbz.\]
Let $p(\mbt\g \mbepsilon)$ be the learned conditional treatment distribution from a perfectly solved first-stage. We use the reverse conditional $p(\mbepsilon \g \mbt)$.
\gls{2sls}'s causal effect estimate can be rewritten as $f(t) =  \E_{q(\mbepsilon\g \mbt=t)}\E_{p(\mbt'\g \mbepsilon)}\E[\mby\g \mbt',\mbepsilon]$.
The true causal effect is $f(t)=\E_{p(\mbz)}[\mbt + \mbt^2\mbz\g \DO(\mbt=t)]=t$. Note that $\E[\mbepsilon\g \mbt=t] = \E_{\mbz\sim \cN(0,1)}[t - \mbz]= t$. The \gls{2sls}-estimate is $3t\not = t = f(t)$:
\begin{align*}
  f^*(t) &= \E_{q(\mbepsilon\g \mbt=t)}\E_{p(\mbt'\g \mbepsilon)}\E[\mby\g \mbt',\mbepsilon] 
  \\
  & = \E_{q(\mbepsilon \g \mbt=t)}\E_{p(z)}\E[\mby\g \mbt'=\mbz +\mbepsilon,\mbepsilon] 
  \\
  &  = \E_{q(\mbepsilon \g \mbt=t)}\E_{p(z)}[\mbepsilon + \mbz +  (\mbepsilon + \mbz)^2\mbz] 
   = 3t
\end{align*}
This shows \gls{2sls} needs to assume properties of the true outcome and treatment processes.

\subsection{The DeepIV objective}\label{subsec:deepiv}
DeepIV~\cite{DBLP:conf/icml/HartfordLLT17} extends the \acrlong{2sls} to use neural networks in both stages of treatment and outcome estimation. For simplicity, we ignore the covariates $\mbx$. The first stage of DeepIV estimates the conditional density of treatment given the \gls{iv}.
Assuming the first-stage of DeepIV is solved and we have an estimate $p_\theta(\mbt\g \mbepsilon)$, the outcome stage of DeepIV solves the following to obtain an estimate $f_\phi(\mbt)$ for the true causal effect $f(t)=\E [\mby\g \DO(\mbt=t)]$:
\begin{equation}\label{eq:app-2sls}
\min_\phi \E_{\mby,\mbepsilon}[\mby - \E_{p_\theta(\mbt\g \mbepsilon)} f_{\phi}(\mbt)]^2.
\end{equation}
This optimization \cref{eq:app-2sls} has a subtle issue.
We will show that there exist different functions that solve the optimization problem, thereby resulting in different treatment-effect estimates.
Assume that the first stage was solved with $\mbt\sim  p(\mbt\g \mbepsilon)$.
The trouble lies in the fact that \cref{eq:app-2sls} averages the function $f_\phi(\mbt)$ over the distribution $p(\mbt\g\mbepsilon)$.
If there exists a function $f'\not=0$ such that $\E_{p(\mbt\g \mbepsilon)}f'(\mbt) = 0$, both $f$ and $f + f'$ solve the optimization problem in \Cref{eq:app-2sls}.
As there is no way to separate $f$ from functions like $f + f'$, we face a non-identifiability issue.

We show that multiplicative interactions between $\mbepsilon, \mbz$ in the true treatment process is a sufficient condition for such functions $f'$ to exist.
Consider the following data generation with no confounding:
\[\mbepsilon, \mbz\sim \cN(0,1),\,\, \mbt = \mbz\mbepsilon, \,\, \mby = \mbt^2 .\]
Here the true causal effect is $f(t) = t^2$.
We will show that $\E_{p(\mbt\g\mbepsilon)}f(\mbt) = \E_{p(\mbt\g\mbepsilon)}(f(\mbt) + \mbt)$, meaning that both $f(t)$ and $f(t) + t$ solve the optimization problem \cref{eq:app-2sls}.
Notice that $\E[\mbt \g  \mbepsilon] = 0$ and therefore
\[\E_{p(\mbt\g\mbepsilon)}(f(\mbt) + \mbt) = \E[(\mbt^2 + \mbt)\g  \mbepsilon] = \E[\mbt^2\g\mbepsilon]  + \E[\mbt\g \mbepsilon] = \E[\mbt^2\g\mbepsilon] = \E_{p(\mbt\g \mbepsilon)}[f(\mbt)].\]
For any constant $a$, the function $t^2 + at$ also solves the optimization problem in~\cref{eq:app-2sls}. This means that multiple solutions to the DeepIV objective exist that are not the true causal effect.

One potential reason that DeepIV may not run into this non-identifiability issue is that an upper bound of the original proposed objective is solved instead.
To compute gradients for the original optimization, two independent expectations are needed, which is not sample-efficient; this is called the double-sample problem.
So, \citep{DBLP:conf/icml/HartfordLLT17} optimize an upper bound (via Jensen's):
\begin{equation}\label{eq:app-upper-bound}
 \E_{F(\mby,\mbepsilon)}[\mby - \E_{p_\theta(\mbt\g \mbepsilon)} f_{\phi}(\mbt)]^2 \leq \E_{F(\mby,\mbepsilon)}\E_{p_\theta(\mbt\g \mbepsilon)}[\mby -  f_{\phi}(\mbt)]^2.
\end{equation}
The RHS above is a log-likelihood problem with a Gaussian likelihood. A general form of this is $\E_{F(\mby,\mbepsilon)}\E_{p_\theta(\mbt\g\mbepsilon)}\log p_\phi(\mby\g \mbt)$; where $p_\phi$ is supposed to model the distribution of the outcome under $\text{do}(\mbt)$. Finally, as DeepIV is based on \gls{2sls}, DeepIV assumes an additive outcome process to avoid the issues in the previous section.

\paragraph{DeepIV under multiplicative treatment processes}
We show here that the upper bound that DeepIV minimizes can also produce biased effect estimates when the true treatment process is multiplicative.
The upper bound that DeepIV optimizes is:
\[ \arg \min_{f^*} \E_{F(\mby, \mbepsilon)}\E_{p(\mbt\g\mbepsilon)}[\mby - f^*(\mbt)]^2 = \arg \min_{f^*} \E_{F(\mbepsilon)}\E_{p(\mbt\g \mbepsilon)} \E_{F(\mby\g\mbepsilon)} [\mby - f^*(\mbt)]^2 \]
Note that we use $F(\mby\g \mbepsilon)$ and not $F(\mby\g \mbt,\mbepsilon)$ because here $\mbt$ refers to the synthetic treatment sampled from the conditional distribution $p(\mbt\g \mbepsilon)$ learned in the first stage of DeepIV, which means $\mby \indep \mbt\g \mbeps$.
We do a bias-variance decomposition of the expectation and refer to terms that do not depend on $h$ as constants $C$ with respect to the optimization.
\begin{align}
  \begin{split}
    \E_{F(\mbepsilon)}\E_{p(\mbt\g \mbepsilon)} \E_{F(\mby\g\mbepsilon)} [\mby - f^*(\mbt)]^2  & = \E_{F(\mbepsilon)}\E_{p(\mbt\g \mbepsilon)} \E_{F(\mby\g\mbepsilon)} [\E[\mby\g \mbepsilon] - f^*(\mbt)]^2  + \E_{F(\mbepsilon)}[\sigma^2(\mby\g \mbepsilon)]
    \\
		 & = \E_{p(\mbt)}\E_{p(\mbepsilon\g \mbt)} \E_{F(\mby\g\mbepsilon)} [\E[\mby\g \mbepsilon] - f^*(\mbt)]^2  + C
   \end{split}
\end{align}
Now consider the generation process $\mbepsilon,\mbz \sim \mathcal{N}(0,1)$ with the true treatment and outcome generated as $\mbt = \mbepsilon \mbz$ and $\mby = \mbt + \mbz$. Note that $E[\mby \g \mbepsilon = a] = E_{\mbz}[\mbz + a\mbz] =  0$. Therefore the optimization reduces to the following:
\[ \arg\min_{f^*} \E_{p(\mbt)}\E_{p(\mbepsilon\g \mbt)} \E_{F(\mby\g\mbepsilon)} [0 - f^*(\mbt)]^2  = 0 \not = f(t)= t\]
Thus, DeepIV's relaxed optimization problem also needs assumptions on the true treatment process.

\subsection{Information preserving maps  and additional utility constraints}\label{subsec:utility}

A bijective map is one that maps each element in its domain to a unique element in its range. No information can be lost in this process, resulting in bijective transformations being called information-preserving maps.
Information-preserving maps preserve computations that only involve conditioning and expectations; meaning that the causal effect estimate $\E_{\hz}\E [\mby\g \mbt,\hz]$ is preserved.
Therefore we can impose additional distributional utility constraints satisfied by bijective transformations of the general control function $\hz$, without losing the properties of ignorability.

Coupled with flexible over-parametrized modelling, information-preserving maps give us the ability to enforce utility constraints on the latent space of $\hz$.
If there is an outcome-model that works well with data drawn from a normal distribution, one can add an additional term to \gls{vde}'s objective that is the KL divergence between the distribution of $\hz$ and a normal distribution.
If we wanted information about continuity in $\mbt$ to be preserved in $\hz$, we could enforce linear interpolation.
Similarly, we could force an constructed $\hz$ to have a monotonic relation with $\mbt$.
One could enforce multiple constraints from a combination of distances, divergences, ordering and modality constraints.
When used correctly, these constraints trade optimization complexity between outcome-stage and \gls{vde}.

\section{Experimental Details}\label{appsec:exps}
In this section, we expand on the details of experiments presented in~\cref{sec:exps}.
In all experiments, the hidden layers in both encoder and decoder networks have 100 units and use ReLU activations.
The outcome model is also a 2-hidden-layer neural network with ReLU activations unless specified otherwise.
For the simulated data, the hidden layers in the outcome model have 50 hidden units.
We optimize \gls{vde} and outcome-stage for $100$ epochs with Adam; starting with a learning rate of $10^{-2}$ and halving it every 10 epochs if the training error goes up.

\subsection{Selecting $\lambda$}\label{appsec:val}
We discuss here why good $\lambda$ (equivalently $\kappa$) can be selected based on the resulting expected outcome likelihood, i.e.\ the outcome modelling objective, on a heldout validation set.

As \gls{vde}'s control function is constructed as a function of $(\mbt, \mbeps)$, i.e. $\hz = e(\mbt, \mbeps)$, it holds that $\mby \indep \hz \g \mbeps, \mbt$.
So, predicting $\mby$ from $(\hz, \mbt)$, as in \gls{gcfn}, cannot be better than predicting $\mby$ from $(\mbt, \mbeps)$:
\[ \Hb(\mby \g \mbt, \hz) \geq \Hb(\mby \g \mbt, \mbeps, \hz)  = \Hb(\mby \g \mbt, \mbeps).\]
The slack in the inequality is $\Hb(\mby \g \mbt, \hz) - \Hb(\mby \g \mbt, \mbeps, \hz) =  \MI(\mby, \mbeps \g \mbt, \hz)$
and equality holds when $\mby \indep \mbeps \g \mbt, \hz$.
This independence holds in general only if both $\mbz \indep \mbeps \g \hz$ and perfect reconstruction hold; see~\cref{eq:cond-indep-y-e}.
Thus, in general, the expected outcome likelihood achieves maximum only when both perfect reconstruction and conditional independence are satisfied.

In practice, instead of the unconstrained \gls{vde}, we optimize the lower-bound objective in~\cref{eq:lowerbound} on a finite dataset.
Due to local minima or finite-sample error, this lower-bound optimized with a $\kappa$ that is too large may give a $\hz$ that retains little information about $\mbz$ so as keep the $\kld$ small.
Similarly, when $\kappa$ is too small, $\hz$ may memorize $\mbt$ to keep the reconstruction error small without paying much in the $\kappa \times \kld$ term.
In either case, the resulting $\hz$ fails to satisfy one of either perfect reconstruction or conditional independence, meaning that $\mby \nindep \mbeps \g \mbt ,\hz$ in general.
Then, as discussed above, the outcome model cannot achieve the maximum possible expected outcome likelihood.
This insight suggests the following procedure to select good $\kappa$ based on validation outcome likelihood:
\footnote{At first glance, one failure case seems to be when $\hz$ memorizes $\mbeps$ only, leading to $\mby \indep\mbeps \g \mbt, \hz$. However, such a $\hz$ does not help reconstruct $\mbt$ along with $\mbeps$ while resulting in a large $\KL{q(\hz \g \mbt, \mbeps)}{q(\hz)}$.
This leads to a very sub-optimal objective value in \gls{vde}.
As we maximize to solve \gls{vde}, such failure cases do not occur.}:
\begin{enumerate}
  \item Solve \gls{vde} for a collection of $\kappa$ and obtain the control function $\hz_{\kappa}$ for each.
  \item Regress $\mby$ on $\mbt, \hz_{\kappa}$ and evaluate expected outcome likelihood on a heldout validation set.
  (This heldout set should be different from the one used to tune all other hyperparameters)
  \item Select the $\kappa$ that led to the largest validation outcome likelihood; use the corresponding  $\hz_{\kappa}$ in \gls{gcfn}'s second stage to estimate effects (retrain or use the model from step 2).
\end{enumerate}

\subsubsection{Conditional Independence of outcome and instrument given $\hz, \mbt$}\label{eq:cond-indep-y-e}
By definition, the potential outcome $\mby_{\mbt}$ depends only on $\mbz$ and for any observed $(\mbt, \mby)$, and by consistency, $\mby = \mby_{\mbt}$. Therefore
$\mbz \indep \mbeps \g \hz, \mbt  \implies \mby_{\mbt} \indep \mbeps \g \hz, \mbt  \Longleftrightarrow \mby \indep \mbeps \g \hz, \mbt .$
Under the joint $q(\hz, \mbt, \mbeps, \mbz) = q(\hz \g \mbt, \mbeps) F(\mbt, \mbeps, \mbz)$, 
it follows that $\mbz \indep \mbeps \g \hz, \mbt $ when the reconstruction property and the conditional independence $\mbz \indep \mbeps \g \hz$ hold:
\begin{align}\label{eq:cond-indep-z-e}
  \begin{split}
    q(\mbz, \mbeps  \g\hz, \mbt) &= q(\mbz \g \mbeps, \hz, \mbt)  q(\mbeps \g \hz, \mbt)
    \\
        &  =  q(\mbz \g \mbeps, \hz)  q(\mbeps \g \hz, \mbt) \quad \{\text{by reconstruction } \mbt = d(\hz, \mbeps) \}
    \\ 
        &  =  q(\mbz \g \hz)  q(\mbeps \g \hz, \mbt) \quad \{\text{by joint independence }  \mbz \indep \mbeps \g \hz\}
    \\
        &  =  q(\mbz \g \hz, \mbt)  q(\mbeps \g \hz, \mbt) \quad \{\text{by joint independence and reconstruction } \mbz \indep \mbt \g \hz\},
  \end{split}
\end{align}
where $\mbz \indep \mbt \g \hz$ is shown in the proof of~\cref{thm:intro}.
If $\mby_{\mbt}$ is an invertible function of $\mbz$,  $\mby \indep \mbeps \g \hz, \mbt \implies \mbz \indep \mbeps \g \hz, \mbt$. 
Thus, in general, $\mbz \indep \mbeps \g \hz, \mbt$ is a necessary condition for $\mby \indep \mbeps \g \hz, \mbt$.

\subsection{Simulations with Specific Decoder Structure}\label{appsubsec:sims}
We used the python package \textit{statsmodels} for \gls{2sls} and our own implementation of \gls{cfn}.
We used the \href{https://deepiv.readthedocs.io/en/latest/readme.html}{DeepIV package} developed by \citet{DBLP:conf/icml/HartfordLLT17}.

\paragraph{Multiplicative treatment +  Additive outcome.}
We use the \gls{2sls} function from statsmodels~\citep{seabold2010statsmodels} which uses a linear model $\mbt  = \beta\mbepsilon + \mbeta_{\mbt}$ that will correctly predict that $\E[\mbt\g \mbepsilon] = 0$.
We optimized the treatment and the response models in DeepIV~\citep{DBLP:conf/icml/HartfordLLT17} for a 100 epochs each.

\subsection{\gls{gcfn} on high-dimensional covariates}\label{appsec:mnist-exp}
Here, we give further details about~\cref{sec:mnist-exp}.
We give~\citet{DBLP:conf/icml/HartfordLLT17}'s simulation with our notation:
\begin{align*}
  \mbz, \mbeps \sim \cN(0,1) ,\quad  \mbt = 25 +  (\mbeps+3)\psi_s + \nu, \quad \mby = \cN\big(100 + (10 + \mbt)\ell(\mbx) \psi_s - 2\mbt + 0.5 \mbz, 0.75\big),
\end{align*}
where $\psi_s$ is a non-linear function of time $s$, and $\ell(\mbx)$ is the label of the MNIST image.
We optimized both \gls{vde} and outcome stage with Adam with batch size $500$ for $200$ epochs beginning at $10^{-2}$ and halving the learning rate when the average loss over $5$ epochs increases.
We use the outcome model architecture from DeepIV~\cite{DBLP:conf/icml/HartfordLLT17} where convolutional layers construct a representation which is concatenated with $\mbt$ and $s$, before being fed to the fully-connected layers.
\Gls{gcfn}'s outcome model differs only in that the fully-connected layers take as input the control function $\hz$, time $s$ and treatment $\mbt$.
The best outcome model was chosen based on validation outcome MSE.

\subsection{\gls{gcfn} on high-dimensional \gls{iv}}\label{appsec:mnist-iv-exp}
\begin{wrapfigure}[11]{r}{0.42\textwidth}
  \vspace{-15pt}
  \centering
   \includegraphics[width=0.42\textwidth]{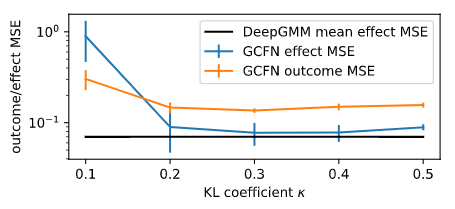}
  \captionof{figure}{\gls{gcfn} performs on par with DeepGMM on high-dimensional IV experiment specified in DeepGMM~\citep{bennett2019deep}.}
  \label{fig:mnist-iv-exp}
\end{wrapfigure}
Here, we give further details about~\cref{sec:mnist-iv-exp}.
The encoder and additive decoder in \gls{vde} are $2$-layer networks like in the~\cref{sec:decoder-structure}.
In this experiment we use a $3$ layer outcome model with $50$ units in each layer.
We used $10,000$ samples as in DeepGMM and optimized both \gls{vde} and outcome stage with Adam with a batch size $1000$ for $100$ epochs beginning at a learning rate of $10^{-2}$ and halving it when the average loss over $5$ epochs increases.
We plot outcome and effect MSE for \gls{gcfn} for $5$ different $\kappa\in\{0.1, 0.2, 0.3, 0.4, 0.5\}$ in~\cref{fig:mnist-iv-exp}.
Note that low outcome MSE corresponds to low effect MSE.
The plot shows mean and standard deviation of effect MSE of the causal effect for $5$ different $\alpha$'s and 10 random seeds.
\gls{gcfn} performs on par or better than all methods given in DeepGMM~\citep{bennett2019deep}.

\subsection{Additional experiments}
The following experiment is done with a structurally unrestricted decoder even though the true treatment process is additive.
We compare against \gls{cfn} to demonstrate that \gls{gcfn} does not require structural restrictions on the outcome process.
Let $\mathcal{N}$ be the normal distribution and $\alpha$ be a parameter to control the confounding strength. We generate 
\begin{align}\label{eq:app-cfnexp-gen}
 \mbz, \mbepsilon \sim \cN(0,1),\quad \mbt = (\mbz + \mbepsilon)/\sqrt{2}, \quad \mby \sim \cN(\mbt^2  + \alpha \mbz^2, 0.1).   
\end{align}
The larger the absolute values of $\alpha$, the more the confounding. 
In economics terminology, the treatment noise and the outcome noise are $\mbeta_t = \mbz$ and $ \mbeta_y= \alpha \mbz^2 + noise$ respectively.
The generation process in \cref{eq:app-cfnexp-gen} violates assumption A4 in \citet{guo2016control} for \gls{cfn}: $\E[\mbeta_y|\mbeta_t] \propto \mbeta_t$.  
\gls{gcfn} does not require this assumption.
We use $5000$ samples and a batch size of $500$.
We discretize the treatment to have $50$ categories.
Of the 50, $48$ categories correspond to equally sized bins in $[-3.5, 3.5]$, with the remaining $2$ correspond to values less than $-3.5$ and greater than $3.5$ respectively.
We compare against \gls{cfn} with both stages correctly specified as functions of $\mbt$ and $\mbz$.

We find, as expected, that \gls{gcfn} out-performs \gls{cfn}. Over 5 runs, for $\alpha=1$, we obtain an RMSE of $\boldsymbol{0.3 \pm 0.1}$ while the \gls{cfn} only manages to obtain an RMSE of $\boldsymbol{1.5 \pm 0.1}$ despite having the correctly specified model for $\mbt^2$.
For other $\alpha \in \{-2,-1,2\}$, \gls{gcfn} was similarly better.
\printglossary[type=\acronymtype]

\end{document}